\newtheorem{lemma}{Lemma}[section]
\newtheorem{theorem}[lemma]{Theorem}
\newcommand{\real}{\mathbb{R}}
\newcommand{\dualp}[1]{\left\langle #1 \right\rangle} 
\newcommand{\EE}[2]{\mathbb{E}_{#1}\left[ #2 \right]}
\newcommand{\pr}[1]{\operatorname{Pr}\left[ #1 \right]}
\newcommand{\dom}{D}
\newcommand{\wdom}{\Theta}
\newcommand{\hs}{\mathcal{H}}
\newcommand{\ptheta}{{\bar{\theta}}}
\newcommand{\res}{\kappa}
\newcommand{\wdiff}{h}
\newcommand{\lr}{\gamma} 
\newcommand{\loss}{\ell}
\DeclareMathOperator{\relu}{ReLU}
\newcommand{\activation}{\sigma}
\newcommand{\activationp}[1]{\sigma\left(#1\right)}
\newcommand{\dactivation}{\dot{\sigma}}
\newcommand{\dactivationp}[1]{\dot{\sigma}\left(#1\right)}
\newcommand{\gaussian}[1]{\mathcal{N}(#1)}
\newcommand{\px}{\bar{x}}
\newcommand{\gp}{\Sigma}
\newcommand{\ntk}{\Gamma}
\newcommand{\entk}{\hat{\Gamma}}
\newcommand{\pentk}{\bar{\hat{\Gamma}}}
\newcommand{\ppentk}{\tilde{\bar{\hat{\Gamma}}}}
\newcommand{\w}{\lambda}
\newcommand{\pW}{\bar{W}}
\newcommand{\ppW}{\tilde{W}}
\newcommand{\pf}{\bar{f}}
\newcommand{\ppf}{\tilde{f}}
\newcommand{\pptheta}{\tilde{\theta}}
\newcommand{\CN}[1]{{C^{#1}}}
\newcommand{\CHHN}[2]{{C^{0;#1,#2}}}
\newcommand{\CND}[2]{{C^{#1}(#2)}}
\newcommand{\CHHND}[3]{{C^{0;#1,#2}(#3)}}
\newcommand{\Sd}{\mathbb{S}^{d-1}}
\newcommand{\wnorm}[1]{\left\| #1 \right\|_*}
\newcommand{\ntks}{\beta}
\newcommand{\sm}{{s}}
\newcommand{\tm}{{t}}
\newcommand{\Sm}{S}
\begin{document}

\title{Approximation and Gradient Descent Training with Neural Networks}

\author{G. Welper\footnote{Department of Mathematics, University of Central Florida, Orlando, FL 32816, USA, \texttt{gerrit.welper@ucf.edu}}}
\date{}
\maketitle

\begin{abstract}

  It is well understood that neural networks with carefully hand-picked weights provide powerful function approximation and that they can be successfully trained in over-parametrized regimes. Since over-parametrization ensures zero training error, these two theories are not immediately compatible. Recent work uses the smoothness that is required for approximation results to extend a neural tangent kernel (NTK) optimization argument to an under-parametrized regime and show direct approximation bounds for networks trained by gradient flow. Since gradient flow is only an idealization of a practical method, this paper establishes analogous results for networks trained by gradient descent.
  
\end{abstract}

\smallskip
\noindent \textbf{Keywords:} deep neural networks, approximation, gradient descent, neural tangent kernel

\smallskip
\noindent \textbf{AMS subject classifications:} 41A46, 65K10, 68T07

\section{Introduction}

It is customary to split the error of supervised learning algorithms into three components: Approximation error, estimation error and optimization errors. In this paper, we consider a unified analysis of approximation and optimization errors.

The \emph{approximation error} describes how well we can approximate a target function $f$ with a neural network $f_\theta$ in the $L_2$ norm. Typical results are of the form 
\begin{equation} \label{eq:approx}
  \begin{aligned}
    \inf_\theta \|f_\theta - f\| & \le m(\theta)^{-r}, &
    f & \in K,
  \end{aligned}
\end{equation}
where $m$ describes the size of the networks (width, depth or total number of weights) and $K$ is some compact set, e.g., bounded functions in Sobolev, Besov 
\cite{
  GribonvalKutyniokNielsenEtAl2019,
  GuhringKutyniokPetersen2020,
  OpschoorPetersenSchwab2020,
  LiTangYu2019,
  Suzuki2019,
  Yarotsky2017,
  Yarotsky2018,
  YarotskyZhevnerchuk2020,
  DaubechiesDeVoreFoucartEtAl2019,
  ShenYangZhang2019,
  LuShenYangZhang2021%
}
or Barron spaces
\cite{
  Bach2017,
  KlusowskiBarron2018,
  WeinanMaWu2019,
  LiMaWu2020,
  SiegelXu2020,
  SiegelXu2020a,
  BreslerNagaraj2020%
}.
The literature shows that neural networks are competitive or even superior to classical approximation methods. See
\cite{Welper2023a}
for a more detailed literature review and 
\cite{
  Pinkus1999,
  DeVoreHaninPetrova2020,
  WeinanChaoLeiWojtowytsch2020,
  BernerGrohsKutyniokPetersen2021%
}
for surveys.
In all these results, the network weights are hand-picked and not trained, so that it remains unclear what neural networks can provably achieve, when trained by common optimization methods.

There is also a large literature on \emph{optimization} of neural networks, which currently largely relies on linearization in over-parametrized regimes, i.e. networks with significantly more parameters than training samples. A common (linearization) argument that the current paper relies on is the neural tangent kernel (NTK) 
\cite{
  JacotGabrielHongler2018,
  LiLiang2018,
  Allen-ZhuLiSong2019,
  DuZhaiPoczosSingh2019,
  DuLeeLiEtAl2019,
  AroraDuHuEtAl2019,
  SuYang2019,
  JiTelgarsky2020,
  ChenCaoZouGu2021,
  ZouCaoZhouGu2020,
  AroraDuHuEtAl2019a,
  LeeXiaoSchoenholzBahriNovakSohlDicksteinPennington2019,
  SongYang2019,
  ZouGu2019,
  KawaguchiHuang2019,
  ChizatOyallonBach2019,
  OymakSoltanolkotabi2020,
  NguyenMondelli2020,
  BaiLee2020,
  SongRamezaniKebryaPethickEftekhariCevher2021,
  LeeChoiRyuNo2022%
}.

Due to the over-parametrized regime, these optimization results achieve zero training error in discrete sample norms and are therefore not immediately compatible with the approximation literature. There are relatively few papers
\cite{
  AdcockDexter2020,
  GrohsVoigtlaender2021,
  HaoJinSiegelXu2021,
  DrewsKohler2022,
  IbragimovJentzenRiekert2022,
  JentzenRiekert2022,
  KohlerKrzyzak2022,
  HerrmannOpschoorSchwab2022,
  SiegelXu2022%
}
that consider approximation and optimization simultaneously. 

The two papers \cite{GentileWelper2022a,Welper2023a}, show approximation results of type \eqref{eq:approx} for Sobolev smooth targets $f$ and fully connected neural networks, trained with gradient flow. The first one uses shallow networks in one dimension and the second deep networks in multiple dimensions. Since gradient flow is a non-practical idealization of vanishing learning rate, the current paper shows comparable results for regular gradient descent.

\paragraph{Overview}

Section \ref{sec:main} contains the main results and Section \ref{sec:gd-convergence} a slightly abstracted version that is used in the proofs. Sections \ref{sec:proof-shallow} and \ref{sec:proof-deep} contain the proofs of the main results.

\paragraph{Notations}
Throughout the paper, $c$ denotes a generic constant that can be different in each occurrence and $a \lesssim b$, $a \gtrsim b$, $a \sim b$ denote $a \le c b$, $a \ge cb$, $a \lesssim b \lesssim a$, respectively. The constants are independent of smoothness $\sm$ and number of weights $m$, but can depend on the number of layers $L$ and input dimension $d$.

\section{Main Results}
\label{sec:main}

Throughout this section, we train weights $\theta$ in some domain $\wdom$ of networks $f_\theta$. In correspondence to typical approximation results, for the loss function, we choose the continuous $L_2$ error 
\begin{equation} \label{eq:setup:loss}
  \loss(\theta) := \frac{1}{2} \|f_\theta - f\|_{L_2(\dom)}^2
\end{equation}
on some domain $\dom$ specified below. This corresponds to an infinite sample limit (of uniformly distributed data) and places the results in an under-parametrized regime. The loss is minimized with gradient descent
\begin{equation} \label{eq:setup:gd}
  \theta^{n+1} = \theta^n - \lr \nabla_\theta \ell(\theta^n).
\end{equation}
with learning rate $\lr$ and random initialization.

\subsection{Shallow Networks in \texorpdfstring{$1d$}{1d}}

For the first result, we choose shallow networks
\begin{equation} \label{eq:setup:network}
  f_\theta(x) = \frac{1}{\sqrt{m}} \sum_{r=1}^m a_r \activation(x - b_r)
\end{equation}
in one dimension $\dom = [-1,1]$. The weights $a_r$ are initialized with random $\pm 1$ and not trained and the biases $\theta_r := b_r$ are initialized from a uniform distribution on $\dom$ and trained. Although it may seem peculiar not to optimize the $a_r$, the given setup is intended as the simplest test case for which the loss is non-convex.

To state the main result, we use a smoothness norm, to define the compact set $K$ from the introduction, which we define analogous to a $\sin$ or Fourier transform: With basis and weights (arising naturally as eigenvectors and eigenvalues of the NTK in \cite{GentileWelper2022a})
\begin{align*}
  \phi_k(x) & = \left\{ \begin{array}{ll}
    \sin\left(\omega_k x - \frac{\pi}{4} \right) & k\text{ even} \\
    \sin\left(\omega_k x + \frac{\pi}{4} \right) & k\text{ odd.}
  \end{array} \right. &
  \omega_k & = \frac{\pi}{4} + \frac{\pi}{2} k
\end{align*}
and $\sm \in \real$, we define the Hilbert spaces $\hs^\sm$ for which the norm
\begin{equation*} 
\|v\|_\sm := \left( \sum_{k=1}^\infty \omega_k^{2\sm} \dualp{\psi_k, v}^2 \right)^{1/2}
\end{equation*}
is finite. Since the $\phi_k$ are orthogonal in $L_2(\dom)$, for $\sm=0$ the norm is equivalent to $\|\cdot\|_{L_2(\dom)}$. For $\sm \ne 0$, similar to Fourier bases, the norms are equivalent to Sobolev space $H^\sm(\dom)$, up to some modified boundary conditions.

\begin{theorem} \label{th:1d:convergence}

  Assume we train the shallow network $f_\theta$, defined in \eqref{eq:setup:network}, with gradient descent \eqref{eq:setup:gd} applied to the $L_2(\dom)$ loss \eqref{eq:setup:loss}, with learning rate $\lr \lesssim h \sqrt{m}$ and 
  \begin{align*}
    h & = c_h m^{-\frac{1}{2} \frac{1}{2-\sm}}, &
    \tau & = h^{2(1-\sm)} m.
  \end{align*}
  for some $0 < \sm < 1/2$ and some constant $c_h$ that may depend on the initial error $\|f_{\theta^0} - f\|_0$. Then, with $\res^n := f_{\theta^n} - f$ and probability at least $1 - \frac{c}{h} e^{-2m \wdiff^2} - 2 \tau \left(e^\tau - \tau - 1 \right)^{-1}$, while the gradient descent error exceeds the final approximation error
  \begin{align} \label{eq:shallow:not-finished}
    \|\res^k\|_0^2 
    & \ge c_a m^{-\frac{1}{2} \frac{1-\sm}{2-\sm} \sm} \|\res^0\|_\sm^2, &
    k & < n,
  \end{align}
  we have
  \begin{align*}
    \|\res^n\|_0^2 & \le Ce^{-\lr h^{1-\sm} n} \|\res^0\|_0^2, & 
    \|\res^n\|_\sm^2 & \le C \|\res^0\|_\sm^2.
  \end{align*}
  for sufficiently large constants $c_a$, $c$ and $C$ independent of $m$, $\res^0$ and $\res^n$.
\end{theorem}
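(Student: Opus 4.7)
My plan is to mirror the NTK-based gradient flow analysis of \cite{GentileWelper2022a}, replacing the continuous energy argument by a discrete induction that tracks the additional $O(\lr^2)$ discretization error from \eqref{eq:setup:gd}. Let $\res^n := f_{\theta^n} - f$. Using the chain rule and Taylor's theorem in $\theta$, the residual evolves as
\[
\res^{n+1} = \res^n - \lr \entk^n \res^n + E^n,
\]
where $\entk^n$ is the empirical NTK integral operator on $L_2(\dom)$ associated with $\theta^n$ and $E^n$ is a second-order remainder. The hypothesis $\lr \lesssim h \sqrt{m}$ is precisely what is needed to absorb $\|E^n\|_0$ into the leading contraction from $\entk^n$ at scale $h$.

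\textbf{NTK analysis and smoothness.} Next I want a coercive lower bound on $\entk^n$ in $L_2(\dom)$. Lipschitz continuity of the empirical kernel in the biases keeps $\entk^n$ close to the initial $\entk^0$ throughout training, as long as the cumulative bias perturbation (controlled by $\sum_k \lr \|\nabla \loss(\theta^k)\|$) stays small. I then replace $\entk^0$ by its expectation $\pentk$ under the random initialization, which, by \cite{GentileWelper2022a}, diagonalizes in the basis $\phi_k$ with eigenvalues $\sim \omega_k^{-2}$, matching the definition of $\|\cdot\|_\sm$. Splitting at the frequency cutoff $\omega_k \sim h^{-1}$ gives the key inequality
\[
\dualp{v, \pentk v} \gtrsim h^{1-\sm}\|v\|_0^2 - h^{2-2\sm}\|v\|_\sm^2,
\]
which is the quantitative statement that target smoothness produces the effective contraction rate $h^{1-\sm}$ seen in the conclusion.

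\textbf{Concentration and induction.} Concentration of $\entk^0$ around $\pentk$ is driven by the $\sim m h$ neurons active on any window of width $h$; Hoeffding followed by a union bound over $\sim h^{-1}$ such windows yields the $\frac{c}{h} e^{-2m h^2}$ failure probability. The second probabilistic term has the shape of a Freedman-type bound and controls the martingale tracking the accumulated deviation between the empirical and expected NTKs over training, with predictable variance matching $\tau = h^{2(1-\sm)}m$. I then run a stopping-time induction on $n^\star := \min\{n : \|\res^n\|_\sm^2 > C\|\res^0\|_\sm^2\}$: while both (a) the concentration events hold and (b) the current residual still satisfies \eqref{eq:shallow:not-finished}, the coercivity estimate yields
\[
\|\res^{n+1}\|_0^2 \le (1 - c \lr h^{1-\sm})\|\res^n\|_0^2 + c\lr h^{2-2\sm}\|\res^n\|_\sm^2 + O(\lr^2\|\res^n\|_0^2),
\]
and \eqref{eq:shallow:not-finished} together with $\lr \lesssim h \sqrt{m}$ absorb the two correction terms into the leading exponential decay, giving the first claimed bound. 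The same argument applied in the $\|\cdot\|_\sm$ norm, using that $\pentk$ approximately commutes with fractional powers of itself, shows $\|\res^{n+1}\|_\sm \le (1 + c\lr)\|\res^n\|_\sm$ up to lower order, so the $\|\cdot\|_\sm$ ball of radius $C\|\res^0\|_\sm$ is not exited before the $L_2$ error drops below the threshold in \eqref{eq:shallow:not-finished}, proving $n^\star \ge n$.

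\textbf{Main obstacle.} The hard part is simultaneously maintaining the $L_2$ contraction and the $\|\cdot\|_\sm$ boundedness in the presence of the $O(\lr^2)$ discretization term: the remainder $E^n$ couples high and low frequencies, and a naive bound loses the smoothness scale. The resolution is that $E^n$ factors through the same integral kernel as the first-order term, so its contribution respects the eigenstructure of $\pentk$. Executing this cleanly inside the stopping-time induction, and aligning the Freedman-type concentration with the $\sim \lr^{-1} h^{\sm - 1} \log m$ iterations required to reach the threshold in \eqref{eq:shallow:not-finished}, is the technical core of the proof.
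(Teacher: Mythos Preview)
Your outline has the right shape---a one-step NTK contraction tracked simultaneously in $\|\cdot\|_0$ and $\|\cdot\|_\sm$---but two ingredients are misidentified.

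The term $2\tau(e^\tau-\tau-1)^{-1}$ is not a Freedman martingale bound accumulated over training. It is a matrix Bernstein inequality applied \emph{once} at initialization to bound $\|H_{\theta^0,\theta^0}-H\|_{S,0}$ (Lemma~\ref{lemma:ao:initial-concentration}). The subsequent drift of the empirical kernel is handled \emph{deterministically} by a H\"older perturbation estimate $\|H_{\ptheta,\pptheta}-H_{\theta^0,\theta^0}\|_{S,0}\lesssim h^{1-\sm}$ whenever $\|\ptheta-\theta^0\|_\infty,\|\pptheta-\theta^0\|_\infty\lesssim h$ (Lemmas~\ref{lemma:ao:shallow:perturbation-local} and \ref{lemma:ao:perturbation}); no further randomness enters. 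This also disposes of your $O(\lr^2)$ remainder: the mean value theorem gives $\loss_S(\theta^{n+1})-\loss_S(\theta^n)=-\lr\langle\res,H_{\theta^n-\xi\Delta^n,\theta^n}\res\rangle_S$ \emph{exactly}, and $\lr\lesssim h\sqrt{m}$ forces $\|\Delta^n\|_\infty\lesssim h$, so the same H\"older bound absorbs the discretization with no separate second-order analysis.

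More seriously, your $\|\cdot\|_\sm$ control fails as stated. A bound $\|\res^{n+1}\|_\sm\le(1+c\lr)\|\res^n\|_\sm$ yields growth $e^{c\lr n}$, and since reaching the threshold in \eqref{eq:shallow:not-finished} requires $n\gtrsim\lr^{-1}h^{-(1-\sm)}\log m$ steps, this blows up in $m$. The paper's mechanism is different: coercivity holds in the $\hs^\sm$ inner product as well, $\langle\res,H\res\rangle_\sm\gtrsim\|\res\|_{\sm-1}^2$, so the $\|\cdot\|_\sm^2$ recursion also carries a genuine negative term. After interpolation ($\|\res\|_{-1}^2\ge\|\res\|_0^{2+2/\sm}\|\res\|_\sm^{-2/\sm}$ and $\|\res\|_{\sm-1}^2\ge\|\res\|_0^{2/\sm}\|\res\|_\sm^{2-2/\sm}$) one obtains a coupled system for $x_n=\|\res^n\|_0^2$, $y_n=\|\res^n\|_\sm^2$ as in Lemma~\ref{lemma:sequence-bounds}; condition \eqref{eq:shallow:not-finished} is precisely what makes the negative term dominate in the $y$-recursion, so that $y_n\le y_{n-1}\le\dots\le y_0$ is \emph{monotone}, not merely slowly growing. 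Your frequency-splitting bound $\langle v,Hv\rangle\gtrsim h^{1-\sm}\|v\|_0^2-h^{2-2\sm}\|v\|_\sm^2$ also has inconsistent exponents (eigenvalues $\sim\omega_k^{-2}$ and a cutoff at $K$ give $K^{-2}\|v\|_0^2-K^{-2-2\sm}\|v\|_\sm^2$), and even once corrected it does not by itself supply the negative $\|\cdot\|_\sm$ contribution you need.
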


The proof is in Section \ref{sec:1d:convergence}. As long as the training has not achieved the direct approximation inequality 
\begin{equation*} 
  \|\res^n\|_0^2 
  \le m^{-\frac{1}{2} \frac{1-\sm}{2-\sm} \sm} \|\res^0\|_\sm^2,
\end{equation*}
condition \eqref{eq:shallow:not-finished} is satisfied, and the error decays exponentially. In comparison, the networks (with trained $a_r$) are piecewise linear with $m$ breakpoints, for which one would expect approximation errors
\begin{equation*} 
  \inf_{\phi\text{ p.w.lin.}} \|\phi - f\|_0^2 
  \le m^{-\sm} \|\res^0\|_\sm^2,
\end{equation*}
with a higher rate than in Theorem \ref{th:1d:convergence}. Numerical experiments in \cite{GentileWelper2022a} confirm that the rate is lower than theoretically possible, both with trained and untrained $a_r$, but better than Theorem \ref{th:1d:convergence}.

The result allows a fairly large learning rate because the $1/\sqrt{m}$ scaling of the network implies small gradients.

\subsection{Deep Networks in Multiple Dimensions}

\paragraph{Network}

For the second result, we consider fully connected networks 
\begin{equation} \label{eq:deep:network}
  \begin{aligned}
    f^1(x) & = W^0 V x, & & \\
    f^{\ell+1}(x) & = W^\ell m_\ell^{-1/2} \activationp{f^\ell(x)}, & \ell = 1, \dots, L \\ 
    f(x) & = f^{L+1}(x), & & 
  \end{aligned}
\end{equation}
of constant depth $L$ for normalized inputs on the $d$-dimensional unit sphere $\dom = \Sd$. Except for an arbitrary initial matrix $V$ with orthonormal columns, all weights are initialized randomly and only the second but last layer weights $W^{L-1}$ are trained:
\begin{align*}
  & V \in \real^{m_0 \times d} & & \text{orthogonal columns }V^T V = I & & \text{not trained}, \\
  & W^{\ell} \in \real^{m_{\ell+1} \times m_\ell}, \, \ell=1, \dots, L-2 & & \text{i.i.d. }\gaussian{0, 1} & & \text{not trained}, \\
  & W^{L-1} \in \real^{m_L \times m_{L-1}}, & & \text{i.i.d. }\gaussian{0, 1} & & \text{trained}, \\
  & W^{L+1} \in \{-1,+1\}^{1 \times m_{L+1}} & & \text{i.i.d. Rademacher} & & \text{not trained}.
\end{align*}
As for the shallow case, this provides a non-convex optimization problem. The output is scalar and all hidden layers are of comparable size
\begin{align*}
  m & := m_{L-1}, &
  1 & = m_{L+1} \le m_L \sim \dots \sim m_0 \ge d.
\end{align*}

\paragraph{Activation Functions}

We use activation functions with no more than linear growth, uniformly bounded first and second derivatives and no more than polynomial growth of the third and fourth derivatives
\begin{align} \label{eq:deep:activation}
  |\activationp{x}| & \lesssim |x|, &
  |\activation^{(i)}(x)| & \lesssim 1 &
  i & = 1,2, &
  |\activation^{(j)}(x)| & \le p(x), &
  j & = 3,4,
\end{align}
for some polynomial $p(x)$.

\paragraph{Smoothness}

The target function $f$ is contained in Sobolev spaces $H^\sm(\Sd)$ on the sphere $\dom = \Sd$, with norms and scalar products denoted by $\|\cdot\|_{H^\sm(\Sd)}$ and $\dualp{\cdot, \cdot}_{H^\sm(\Sd)}$, see \cite{Welper2023a} for details.

\paragraph{Neural Tangent Kernel}

Unlike the shallow case, we need one more assumption on the (NTK) linearization of the networks that is currently known for non-smooth $\relu$ activations and only tested numerically for the smooth activations of our network \cite{Welper2023a}. For our result, only the second but last weights $W^{L-1}$ are trained, whereas all other weights are randomly initialized and unchanged. Therefore, in our case the NTK is defined by
\begin{equation} \label{eq:deep:ntk-limit}
  \ntk(x,y) 
  = \lim_{\text{width}\to\infty} \sum_{|\w| = L-1} \partial_\w f_r^{L+1}(x) \partial_\w f_r^{L+1}(y).
\end{equation}
with partial derivatives abbreviated by $\lambda = W^{L-1}_{ij}$ on layer $|\lambda| := L-1$. We assume that this integral kernel is coercive in Sobolev norms
\begin{align} \label{eq:deep:coercivity}
  \dualp{f, H f}_{H^{\Sm}(\Sd)}
  & \gtrsim \|f\|_{H^{\Sm - \ntks}(\Sd)}, &
  H f & := \int_\dom \ntk(\cdot,y) f(y) \, dy
\end{align}
for some $0 \le \sm \le \frac{\ntks}{2}$, all $\Sm \in \{0,\sm\}$ and all $f \in H^{\sm}(\Sd)$. This follows easily from \cite{BiettiMairal2019,GeifmanYadavKastenGalunJacobsRonen2020,ChenXu2021} for $\relu$ activations and sum over all partial derivatives (not only $|\lambda| = L-1$), but our theory requires smoother activations for which this property is only tested numerically in \cite{Welper2023a}.

Finally, we need one technical assumption that for the Gaussian process
\begin{align*}
  \gp^{\ell+1}(x,y) & := \EE{u,v \sim \gaussian{0, A}}{\activationp{u}, \activationp{v}}, &
  A & = \begin{bmatrix}
    \gp^\ell(x,x) & \gp^\ell(x,y) \\
    \gp^\ell(y,x) & \gp^\ell(y,y)
  \end{bmatrix}, & 
  \gp^0(x,y) & = x^T y,
\end{align*}
we have
\begin{align} \label{eq:deep:gp-bounds}
  c_\gp \le \gp^k(x,x) & \le C_\gp > 0, &
\end{align}
for all $x,y \in \dom$, $k = 1, \dots, L$ and constants $c_\gp, C_\gp \ge 0$. This process describes the forward evaluation of the network for initial random weights and is used in recursive NTK formulas \cite{JacotGabrielHongler2018}. It is known that the process is zonal, i.e. it only depends on the angle $x^T y$ so that $\gp^{\ell}(x,x) = \gp^\ell(x^T x) = \gp^\ell(1)$, which must be non-zero to satisfy our assumption. Again, this property is known for $\relu$ activations \cite{ChenXu2021} and expected to be simple to verify for our smoother activations. It is left to a more thourough study of the NTK that is required for coercivity.

\paragraph{Result}

\begin{theorem} \label{th:deep:convergence}

  Assume that the neural network \eqref{eq:deep:network} - \eqref{eq:deep:activation} is trained by gradient descent \eqref{eq:setup:gd} applied to the $L_2(\dom)$ loss \eqref{eq:setup:loss}. Assume:
  \begin{enumerate}
    \item The NTK satisfies coercivity \eqref{eq:deep:coercivity} for $0 \le 2 \sm \le \ntks$ and the forward process satisfies \eqref{eq:deep:gp-bounds}.
    \item All hidden layers are of similar size: $m_0 \sim \dots \sim m_{L-1} =: m$.
    \item Smoothness is bounded by $0 < \sm < 1/2$.
    \item Define $h$ and $\tau$ as follows and choose learning rate $\lr$ and an arbitrary $\alpha$ so that
    \begin{align*}
      h & = c_h m^{-\frac{1}{2} \frac{1}{1+\alpha}}, &
      \tau & = h^{2\alpha} m, &
      \lr & \lesssim h \sqrt{m}, &
      0 \le \alpha & < 1-s.
    \end{align*}
    for some constant $c_h$ that may depend on the initial error $\|f_{\theta^0} - f\|_0$.
  \end{enumerate}
  Then with $\res^n := f_{\theta^n} - f$ and probability at least $1 - c L (e^{-m} + e^{-\tau})$, while the gradient descent error exceeds the final approximation error
  \begin{align} \label{eq:deep:not-finished}
    \|\res^k\|_0^2 
    & \ge c_a m^{-\frac{1}{2} \frac{\alpha}{1+\alpha} \frac{\sm}{\ntks}} \|\res^0\|_\sm^2, &
    k & < n,
  \end{align}
  we have
  \begin{align*}
    \|\res^n\|_0^2 & \le C e^{-\lr h^\alpha n} \|\res^0\|_0^2, & 
    \|\res^n\|_\sm^2 & \le C \|\res^0\|_\sm^2.
  \end{align*}
  for sufficiently large constants $c_a$, $c$ and $C$ independent of $m$, $\res^0$ and $\res^n$.
  
\end{theorem}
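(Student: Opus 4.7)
The plan is to instantiate the abstract gradient-descent convergence framework of Section \ref{sec:gd-convergence}. Writing $\res^n := f_{\theta^n} - f$ and expanding $f_{\theta^{n+1}}$ around $\theta^n$ to second order in $\theta^{n+1} - \theta^n$, the update \eqref{eq:setup:gd} rearranges into the error recursion
\begin{equation*}
  \res^{n+1} = \left(I - \lr \entk^n \right) \res^n + R^n,
\end{equation*}
where I write $\entk^n$ for the $L_2(\dom)$ integral operator with empirical NTK kernel $\sum_{|\w|=L-1} \partial_\w f_{\theta^n}(x) \partial_\w f_{\theta^n}(y)$ and $R^n = O(\lr^2)$ is a Taylor remainder controlled by the activation bounds \eqref{eq:deep:activation}. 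The abstract framework then requires three ingredients: NTK concentration at initialization, a bound on weight displacement along the trajectory, and control of $R^n$.

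For concentration, a layer-by-layer sub-Gaussian argument on the random $W^\ell$, combined with the forward-process bound \eqref{eq:deep:gp-bounds}, shows that $\entk^0$ is close to $\ntkop$ in both the $L_2 \to L_2$ and $H^\sm \to H^\sm$ operator norms with probability at least $1 - cL e^{-m}$, so that $\entk^0$ inherits the coercivity \eqref{eq:deep:coercivity}. For weight displacement, telescoping $W^{L-1,n} - W^{L-1,0} = -\lr \sum_{k<n} \nabla_{W^{L-1}} \loss(\theta^k)$ and bounding each summand by $\|\res^k\|_0$ together with standard estimates on $\|\nabla_W f_{\theta^k}\|$ gives, under \eqref{eq:deep:not-finished} and $\lr \lesssim h\sqrt{m}$, a displacement small enough that Lipschitz continuity of the NTK in $W^{L-1}$ keeps $\|\entk^n - \entk^0\|$ a small fraction of $h^\alpha$ throughout the allowed $n \le \tau/(\lr h^\alpha)$ iterations. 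For the remainder, the polynomial growth of $\activation^{(3)}, \activation^{(4)}$ together with the displacement bound yields $\|R^n\|_0, \|R^n\|_\sm \lesssim \lr^2 \|\res^n\|_0$.

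These feed a discrete Gronwall iteration that produces both claims. At $\Sm=0$, coercivity \eqref{eq:deep:coercivity} gives $\dualp{\res^n, \ntkop \res^n}_{L_2} \gtrsim \|\res^n\|_{H^{-\ntks}}^2$; interpolating between $H^{-\ntks}$ and $H^\sm$ under a running bound on $\|\res^n\|_\sm$, the hypothesis \eqref{eq:deep:not-finished} converts this into $\dualp{\res^n, \ntkop\res^n}_{L_2} \gtrsim h^\alpha \|\res^n\|_0^2$, and absorbing the NTK perturbation and remainder yields the one-step contraction $\|\res^{n+1}\|_0^2 \le e^{-\lr h^\alpha}\|\res^n\|_0^2$. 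A parallel argument at $\Sm=\sm$ shows that $\entk^n$ cannot amplify the $H^\sm$ mass of $\res^n$ beyond a constant factor over $\tau$ steps, giving the stability $\|\res^n\|_\sm \le C\|\res^0\|_\sm$. The failure probability $cL(e^{-m} + e^{-\tau})$ is the union of the initialization concentration event with a Bernstein-type bound for accumulated NTK fluctuations along the $\tau$ training steps.

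The main obstacle is propagating both norms simultaneously under bounded width: over-parametrized NTK arguments treat $f_\theta$ as linear in $\theta$ and absorb $R^n$ automatically, while here the width is tuned against the smoothness exponent $\sm$ and $R^n$ must be tracked throughout the iteration. Relative to the gradient-flow case of \cite{Welper2023a}, the discrete step introduces an $O(\lr^2)$ remainder that does not vanish in the limit; the hypothesis $\lr \lesssim h\sqrt{m}$ is sharp in this respect, making $\lr$ large enough to produce the per-step contraction rate $\lr h^\alpha$ while small enough that $\|R^n\|_0 \lesssim \lr^2 \|\res^n\|_0$ remains dominated by the linear decrease over all $\tau$ iterations permitted by \eqref{eq:deep:not-finished}.
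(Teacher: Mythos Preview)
Your high-level plan---instantiate the abstract gradient-descent framework of Section~\ref{sec:gd-convergence} and verify coercivity, concentration, perturbation, and weight-displacement bounds---matches the paper's proof, which indeed reduces Theorem~\ref{th:deep:convergence} to Theorem~\ref{th:general:convergence} via Lemmas~\ref{lemma:deep:close-to-initial}--\ref{lemma:deep:supplements:kernel-bound}. But two of your technical mechanisms are not how the paper proceeds, and one of them appears to fail at the stated learning rate.

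\textbf{Handling of the discrete step.} You expand $f_{\theta^{n+1}}$ around $\theta^n$ to second order and carry an explicit remainder $R^n = O(\lr^2)$, then square the recursion $\res^{n+1} = (I-\lr\entk^n)\res^n + R^n$ to bound $\|\res^{n+1}\|_0^2$. Squaring produces a term $\lr^2\|\entk^n\res^n\|_0^2 \lesssim \lr^2\|\res^n\|_0^2$ (the kernel is bounded), which must be absorbed by the linear decrease $\lr h^\alpha\|\res^n\|_0^2$. That forces $\lr \lesssim h^\alpha$. But the theorem permits $\lr \lesssim h\sqrt{m}$, and from $h = c_h m^{-\frac{1}{2(1+\alpha)}}$ one computes $h\sqrt{m} \sim h^{-\alpha}$, which is \emph{large}. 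At that learning rate your $\lr^2$ terms dominate and the contraction breaks. The paper sidesteps this entirely: Lemma~\ref{lemma:loss-reduction} applies the mean value theorem directly to the scalar $\loss_S(\theta)$, yielding the \emph{exact} identity
\[
  \loss_S(\theta^{n+1}) - \loss_S(\theta^n) = -\lr\,\dualp{\res, H_{\theta^n-\xi\Delta^n,\,\theta^n}\res}_S,
\]
with no $\lr^2$ remainder. The nonlinearity is absorbed into the evaluation point $\theta^n - \xi\Delta^n$, and since $\wnorm{\Delta^n} = \lr\wnorm{\nabla\loss(\theta^n)} \lesssim h$ by assumption~\eqref{eq:abstract:lr}, the perturbation bound~\eqref{eq:assume-ntk-perturbations} handles it at cost $h^\alpha$, matching the other error terms. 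This is why the large learning rate is admissible.

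\textbf{Role of $\tau$ and the $e^{-\tau}$ probability.} You describe $e^{-\tau}$ as ``a Bernstein-type bound for accumulated NTK fluctuations along the $\tau$ training steps'' and refer to ``all $\tau$ iterations permitted by~\eqref{eq:deep:not-finished}''. This misreads $\tau$: it is not an iteration count, and there are no stochastic fluctuations along the trajectory---training is deterministic once the initial weights are drawn. In the paper, $\tau = h^{2\alpha}m$ is the deviation parameter $u_k$ plugged into the \emph{initialization} concentration bound, Lemma~\ref{lemma:deep:concentration:concentration-ntk}, chosen so that $\sqrt{\tau/m} \lesssim h^\alpha$ matches the target in~\eqref{eq:assume-initial-close-to-ntk}. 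The probability $1 - cL(e^{-m} + e^{-\tau})$ is the union of the random-matrix bound $\|W^\ell(0)\|m_\ell^{-1/2}\lesssim 1$ with that single concentration event; nothing further is random.
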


The proof is in Section \ref{sec:deep:convergence}. The conclusion of the theorem is analogous to the shallow case: As long as the approximation error in \eqref{eq:deep:not-finished} is not achieved, gradient descent reduces the error exponentially. All assumptions are easy to verify, except coercivity, which is known for $\relu$ activations and tested numerically for the required smoother activations \cite{Welper2023a}.

\section{Gradient Descent Convergence}
\label{sec:gd-convergence}

Both Theorems \ref{th:1d:convergence} and \ref{th:deep:convergence} are shown by an abstracted gradient descent convergence result in this section, based on the NTK. To this end, let $\hs^\sm$ be a hierarchy of Hilbert spaces with norms $\|\cdot\|_\sm = \|\cdot\|_{\hs^\sm}$ that satisfy an interpolation inequality 
\begin{equation*} 
  \|\cdot\|_b \lesssim \|\cdot\|_a^{\frac{c-b}{c-a}} \|\cdot\|_c^{\frac{b-a}{c-a}}
\end{equation*}
for $a,b,c \in \real$ and $f_\cdot: \wdom \to \hs^0$ be a function that we train by gradient descent \eqref{eq:setup:gd} with loss $\ell(\theta) = \frac{1}{2} \|f_\theta - f\|_0^2$ for some function $f \in \hs^0$. In Theorems \ref{th:1d:convergence} and \ref{th:deep:convergence}, we use Sobolev spaces $\hs^\sm = H^\sm(\dom)$ for various domains and define $f_\theta = f_\theta(\cdot) \in L_2(\dom) = \hs^0$. 

The argument is based on linearization or the \emph{neural tangent kernel (NTK)}. With $\partial_r := \partial_{\theta_r}$, $\hs^0$ dual $(\cdot)^*$ and
\begin{equation*}
  H_{\theta, \ptheta} := \sum_r (\partial_r f_\theta) (\partial_r f_{\ptheta})^*
\end{equation*}
the NTK $H$ is the infinite width limit for initial weights $\theta = \ptheta = \theta^0$. We omit a rigorous definition, because we only need a limiting operator $H$ with the properties stated in the following Theorem.

\begin{theorem} \label{th:general:convergence}

  Assume we train the parametrized function $\theta \to f_\theta \in \hs^0$, with gradient descent \eqref{eq:setup:gd} applied to the loss $\frac{1}{2} \|f_\theta - f\|_{\hs^0}$ for some $f \in \hs^0$. Let $m$ be an indicator for the network size that satisfies the inequalities below. Assume there is some $\alpha > 0$ such that
  \begin{enumerate}

    \item \label{item:abstract:coercive} $H$ is coercive for $S=0$ and $S=\sm$ and some $\ntks > s > 0$
    \begin{align} \label{eq:abstract:coercive}
      \|v\|_{S-\ntks}^2 & \lesssim \dualp{v, H v}_S, &
      v & \in \hs^{S-\ntks}.
    \end{align}

    \item For some norm $\wnorm{\cdot}$, the distance of the weights from their initial value is bounded by
    \begin{align} \label{eq:abstract:weight-diff}
      \wnorm{\theta^k - \theta^0} & \lesssim 1, \, k=1, \dots, n-2 &
      & \Rightarrow & 
      \wnorm{\theta^{n-1} - \theta^0}
      & \lesssim \frac{\lr}{\sqrt{m}} \sum_{k=0}^{n-1} \|\res^k\|_0.
    \end{align}

    \item The learning rate $\gamma$ is sufficiently small so that
    \begin{equation} \label{eq:abstract:lr}
      \lr \wnorm{\nabla_\theta \loss(\theta^{n-1})}
      \lesssim c_h m^{-\frac{1}{2} \frac{1}{1+\alpha}}
      =: h.
    \end{equation}
    for some constant $c_h$ that may depend on the initial error $\|\res^0\|_0$.

  \item For $S=0$ and $S=\sm$, initial value $\theta^0$, any $\ptheta, \pptheta \in \wdom$ and any $\bar{h} > 0$, the bounds $\wnorm{\theta^0 - \ptheta} \le \bar{h}$ and $\wnorm{\theta^0 - \pptheta} \le \bar{h}$ imply
    \begin{align} \label{eq:assume-ntk-perturbations}
      \|H_{\pptheta, \theta^0}  - H_{\pptheta, \ptheta}\|_{S,0} & \le c \bar{h}^\alpha, & 
      \|H_{\theta^0, \pptheta}  - H_{\ptheta, \pptheta}\|_{S,0} & \le c \bar{h}^\alpha.
    \end{align}

  \item For $S=0$ and $S=\sm$, we have
    \begin{equation} \label{eq:assume-initial-close-to-ntk}
      \|H - H_{\theta^0, \theta^0}\|_{S,0} 
      \le c_h m^{-\frac{1}{2} \frac{\alpha}{1+\alpha}}
      = h^\alpha.
    \end{equation}
  \end{enumerate}
  Then, with $\res^n := f_{\theta^n} - f$, while the gradient descent error exceeds the final approximation error
  \begin{align} \label{eq:abstract:not-finished}
    \|\res^k\|_0^2 
    & \ge c_a m^{-\frac{1}{2} \frac{\alpha}{1+\alpha} \frac{\sm}{\ntks}} \|\res^0\|_\sm^2, &
    k & < n,
  \end{align}
  we have
  \begin{align*}
    \|\res^n\|_0^2 & \le C e^{-\lr h^\alpha n} \|\res^0\|_0^2, & 
    \|\res^n\|_\sm^2 & \le C \|\res^0\|_\sm^2.
  \end{align*}
  for sufficiently large constants $c_a$, $c$ and $C$ independent of $m$, $\res^0$ and $\res^n$.

\end{theorem}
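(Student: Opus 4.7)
The plan is to proceed by induction on $n$, maintaining the three invariants (i) $\|\res^k\|_0^2 \le C e^{-c\lr h^\alpha k}\|\res^0\|_0^2$, (ii) $\|\res^k\|_\sm^2 \le C\|\res^0\|_\sm^2$, and (iii) $\wnorm{\theta^k - \theta^0}\lesssim h$, for all $k \le n$. The central identity driving each induction step is the mean-value expansion
\begin{equation*}
  \res^{n+1} = \res^n - \lr\sum_r(\partial_r f_{\tilde\theta^n})\dualp{\partial_r f_{\theta^n}, \res^n}_0 = (I - \lr H_{\tilde\theta^n,\theta^n})\res^n,
\end{equation*}
obtained by substituting the gradient descent update into a Taylor expansion of $f_\theta$ in $\theta$, with $\tilde\theta^n$ on the segment between $\theta^n$ and $\theta^{n+1}$.

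First I would propagate (iii). Inserting the induction hypothesis (i) into (\ref{eq:abstract:weight-diff}) and using the geometric sum $\sum_k e^{-c\lr h^\alpha k/2}\sim(\lr h^\alpha)^{-1}$ bounds $\wnorm{\theta^n - \theta^0}\lesssim \|\res^0\|_0/(\sqrt m h^\alpha)$. The choice $h = c_h m^{-1/(2(1+\alpha))}$ makes $\sqrt m h^\alpha = c_h^{1+\alpha}/h$, so (iii) holds at rank $n+1$ (after a single extra step controlled by (\ref{eq:abstract:lr})), and similarly for $\tilde\theta^n$, provided $c_h$ is large enough to absorb $\|\res^0\|_0$. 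Chaining (\ref{eq:assume-ntk-perturbations}) through $H_{\theta^0,\theta^n}$ and $H_{\theta^0,\theta^0}$ with (\ref{eq:assume-initial-close-to-ntk}) then yields $\|H - H_{\tilde\theta^n,\theta^n}\|_{S,0}\lesssim h^\alpha$ for $S\in\{0,\sm\}$.

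The key algebraic step, and the main obstacle, is that the NTK coercivity (\ref{eq:abstract:coercive}) only lower-bounds $\dualp{v,Hv}_0$ by the weaker $\|v\|_{-\ntks}^2$. I would handle this via the interpolation inequality at $a=-\ntks$, $b=0$, $c=\sm$, which rearranges to $\|v\|_{-\ntks}^2\gtrsim \|v\|_0^{2(\sm+\ntks)/\sm}/\|v\|_\sm^{2\ntks/\sm}$. Substituting the ``not finished'' hypothesis (\ref{eq:abstract:not-finished}) and the induction bound (ii) into $\|\res^n\|_\sm^{2\ntks/\sm}$ recovers exactly $\dualp{\res^n,H\res^n}_0\gtrsim h^\alpha\|\res^n\|_0^2$. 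The exponent $\frac{1}{2}\frac{\alpha}{1+\alpha}\frac{\sm}{\ntks}$ in (\ref{eq:abstract:not-finished}) is calibrated precisely so that this interpolation produces the same $h^\alpha$ rate that the NTK perturbation can tolerate; this is the point at which smoothness of the target enters the under-parametrized optimization argument.

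With coercivity in hand, expanding $\|\res^{n+1}\|_0^2$ from the linearization, absorbing the $h^\alpha$ NTK perturbation into the coercive term (possible once $c_a$ is taken large), and dominating the quadratic term $\lr^2\|H_{\tilde\theta^n,\theta^n}\res^n\|_0^2$ via (\ref{eq:abstract:lr}) gives $\|\res^{n+1}\|_0^2\le(1-c\lr h^\alpha)\|\res^n\|_0^2$, closing (i). For (ii), the analogous expansion in $\hs^\sm$ has a nonnegative coercive part, so growth only comes from the perturbation $O(\lr h^\alpha\|\res^n\|_\sm\|\res^n\|_0)$ (using the $\|\cdot\|_{\sm,0}$ bound on $H - H_{\tilde\theta,\theta}$) together with the quadratic term; summing across $k$, the exponential decay of $\|\res^k\|_0$ from (i) makes the cumulative increase $\lesssim \|\res^0\|_0\|\res^0\|_\sm$, which is absorbed by $C\|\res^0\|_\sm^2$ once $C$ is large and $c_h$ is again adjusted to subsume $\|\res^0\|_0$, closing the induction.
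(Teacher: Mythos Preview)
Your overall strategy matches the paper's proof closely: induction on the same three invariants, chaining the perturbation bounds \eqref{eq:assume-ntk-perturbations}--\eqref{eq:assume-initial-close-to-ntk} to replace $H_{\tilde\theta^n,\theta^n}$ by $H$ up to $O(h^\alpha)$, using the interpolation inequality to turn the weak coercivity \eqref{eq:abstract:coercive} into a usable $h^\alpha$-contraction under the not-finished hypothesis \eqref{eq:abstract:not-finished}, and closing the weight-distance invariant via the geometric sum. Two points of departure are worth flagging.

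First, you linearize $f_\theta$ and then expand $\|\res^{n+1}\|_S^2$, which produces the quadratic term $\lr^2\|H_{\tilde\theta^n,\theta^n}\res^n\|_S^2$. Your claim that this is dominated via \eqref{eq:abstract:lr} is not justified by the stated hypotheses: assumption \eqref{eq:abstract:lr} bounds $\lr\wnorm{\nabla_\theta\loss}$ in a \emph{parameter-space} norm, whereas the quadratic term lives in $\hs^S$ and would require a bound on $\|H_{\tilde\theta,\theta}\|_{S,0}$ (or on $\lr\|H\|_{0,0}$), which is nowhere assumed. Note also that $\lr$ may be as large as $h\sqrt{m}=h^{-\alpha}$, so $\lr^2$ is \emph{not} small. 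The paper sidesteps this by applying the mean value theorem to the scalar $\loss_S(\theta)$ directly (its Lemma~\ref{lemma:loss-reduction}), obtaining
\[
  \loss_S(\theta^{n+1})-\loss_S(\theta^n)=-\lr\bigl\langle\res,\,H_{\theta^n-\xi\Delta^n,\theta^n}\res\bigr\rangle_S
\]
with no quadratic remainder; only operator-norm \emph{differences} of $H$ are then needed, exactly what \eqref{eq:assume-ntk-perturbations}--\eqref{eq:assume-initial-close-to-ntk} provide.

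Second, for the $\|\res^n\|_\sm$ bound the paper does \emph{not} discard the $\sm$-level coercive term and sum the perturbation. Instead it uses a second interpolation, $\|\res\|_{\sm-\ntks}^2\gtrsim\|\res\|_0^{2\ntks/\sm}\|\res\|_\sm^{2-2\ntks/\sm}$, and feeds both recursions into a coupled Gr\"onwall-type sequence lemma (Lemma~\ref{lemma:sequence-bounds}) which shows $\|\res^n\|_\sm^2$ is monotone nonincreasing under \eqref{eq:abstract:not-finished}. Your summation approach can be made to close, but only after a delicate balance between $c_a$ and $C$ (the bound $\|\res^{n+1}\|_\sm^2\le\|\res^0\|_\sm^2+C'C\|\res^0\|_\sm\|\res^0\|_0$ has $C$ on both sides); the paper's monotonicity argument is cleaner and avoids this.
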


Both Theorems \ref{th:1d:convergence} and \ref{th:deep:convergence} are shown by providing the assumptions of the last theorem. The proof is given at the end of this section and based on a typical NTK argument: We will see that in each step the loss is reduced by
\begin{equation*}
  \loss(\theta^{n+1}) - \loss(\theta^n)
  = - \lr \dualp{\res, H_{\theta^n - \xi \lr \nabla_\theta \loss(\theta^n), \theta^n} \res},
\end{equation*}
which leads to convergence if we can bound the right hand side away from zero. With the given perturbation and concentration inequalities, we show that the system is almost linear and coercive
\begin{equation*}
  \loss(\theta^{n+1}) - \loss(\theta^n)
  \approx - \lr \dualp{\res, H \res} + \text{perturbations}
  \lesssim \|\res\|_{-\ntks}^2 + \text{perturbations}.
\end{equation*}
The norm $\|\cdot\|_{-\ntks}$ is too weak to prove convergence by the discrete Grönwall lemma, but utilizing interpolation inequalities and smoothness allows a similar argument.

\subsection{Gradient Descent Error Reduction}

For the convergence proof, we not only control the loss $\|\res^n\|_0$, but also the smoothness $\|\res^n\|_\sm$ and therefore extend the loss to include it
\begin{align*}
  \ell_S(\theta)
  := \frac{1}{2} \|\res\|_S^2
  := \frac{1}{2} \|f_\theta - f\|_S^2,
\end{align*}
for $S=0$ and $S=\sm$, where we drop the subscript if $\sm=0$. The following lemma shows a non-zero error decay in every gradient descent step.

\begin{lemma} \label{lemma:loss-reduction}
  Assume that \eqref{eq:assume-ntk-perturbations} and \eqref{eq:assume-initial-close-to-ntk} hold. Then
  \begin{equation*}
    \loss_S(\theta^{n+1}) - \loss_S(\theta^n)
    \le - \lr \dualp{\res, H \res}_S + 3 c \lr \big[h + \lr \wnorm{\nabla_\theta \loss(\theta^n)} \big]^\alpha \|\res\|_S \|\res\|_0.
  \end{equation*}
\end{lemma}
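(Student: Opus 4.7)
The plan is to combine the fundamental theorem of calculus along the gradient step with a telescoping triangle inequality on the NTK. Setting $\Delta := \theta^{n+1} - \theta^n = -\lr \nabla_\theta \loss(\theta^n)$ and using $\partial_r \loss(\theta^n) = \dualp{\partial_r f_{\theta^n}, \res^n}_0$, the chain rule together with the definition of $H_{\theta,\ptheta}$ gives
\begin{equation*}
\res^{n+1} - \res^n \;=\; \int_0^1 \sum_r \bigl(\partial_r f_{\theta^n + t\Delta}\bigr) \Delta_r \, dt \;=\; -\lr \int_0^1 H_{\theta^n + t\Delta,\, \theta^n}\, \res^n \, dt.
\end{equation*}
Expanding
$\loss_S(\theta^{n+1}) - \loss_S(\theta^n) = \dualp{\res^{n+1} - \res^n, \res^n}_S + \tfrac{1}{2}\|\res^{n+1} - \res^n\|_S^2$
then produces a leading term $-\lr \int_0^1 \dualp{H_{\theta^n + t\Delta,\, \theta^n} \res^n, \res^n}_S \, dt$ plus an $O(\lr^2)$ quadratic remainder.

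Next, to split off the main coercive term $-\lr\dualp{\res^n, H\res^n}_S$, I would apply the telescoping decomposition
\begin{equation*}
H_{\theta^n + t\Delta,\, \theta^n} - H \;=\; \bigl(H_{\theta^n + t\Delta,\, \theta^n} - H_{\theta^0,\, \theta^n}\bigr) + \bigl(H_{\theta^0,\, \theta^n} - H_{\theta^0,\, \theta^0}\bigr) + \bigl(H_{\theta^0,\, \theta^0} - H\bigr),
\end{equation*}
bounding the first two differences in $\|\cdot\|_{S,0}$ via \eqref{eq:assume-ntk-perturbations} and the third via \eqref{eq:assume-initial-close-to-ntk}. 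Under the implicit bound $\wnorm{\theta^n - \theta^0} \le h$ (to be maintained inductively in the outer proof of Theorem \ref{th:general:convergence} using \eqref{eq:abstract:weight-diff}), one has $\wnorm{\theta^n + t\Delta - \theta^0} \le h + \lr \wnorm{\nabla_\theta \loss(\theta^n)}$, so each of the three summands is at most $c[h + \lr\wnorm{\nabla_\theta \loss(\theta^n)}]^\alpha$ in operator norm. Pairing the resulting operator with $\res^n$ via $|\dualp{A v, w}_S| \le \|A\|_{S,0}\|v\|_0\|w\|_S$ reproduces the advertised $3c\lr[h + \lr\wnorm{\nabla_\theta \loss(\theta^n)}]^\alpha \|\res\|_S \|\res\|_0$ perturbation.

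The main obstacle will be controlling the $O(\lr^2)$ quadratic remainder $\tfrac{1}{2}\|\res^{n+1} - \res^n\|_S^2 \le \tfrac{1}{2}\lr^2 \bigl(\int_0^1 \|H_{\theta^n + t\Delta,\, \theta^n}\|_{S,0}\,dt\bigr)^2 \|\res^n\|_0^2$ and absorbing it into the single $3c[h + \lr\wnorm{\nabla_\theta \loss(\theta^n)}]^\alpha$ perturbation factor. My plan is to use the triangle-inequality bound $\|H_{\theta^n + t\Delta,\, \theta^n}\|_{S,0} \le \|H\|_{S,0} + c[h + \lr\wnorm{\nabla_\theta \loss(\theta^n)}]^\alpha$ together with the learning-rate constraint \eqref{eq:abstract:lr}, so that one factor $\lr\wnorm{\nabla_\theta \loss(\theta^n)} \lesssim h$ feeds into the bracket $[h + \cdots]^\alpha$ while the remaining factor $\lr\|\res^n\|_0$ is absorbed using the embedding $\|\res^n\|_0 \le c\|\res^n\|_S$; any excess numerical constants merge into the symbolic $3c$. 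Pinning down these constants and justifying the hypothesis $\wnorm{\theta^n - \theta^0} \le h$ at the Theorem level are the only remaining pieces of bookkeeping.
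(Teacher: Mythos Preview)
Your linear part is fine and uses exactly the same three–term telescoping as the paper. The difference is in how you get there: the paper applies the one–dimensional mean value theorem directly to the scalar map $t\mapsto \loss_S(\theta^n - t\Delta^n)$, obtaining for some $\xi\in(0,1)$
\[
\loss_S(\theta^{n+1}) - \loss_S(\theta^n) \;=\; -\lr\,\dualp{\res,\, H_{\theta^n-\xi\Delta^n,\,\theta^n}\,\res}_S,
\]
an exact identity with \emph{no} quadratic remainder. After that, the same telescoping $H_{\theta^n-\xi\Delta^n,\theta^n}\to H_{\theta^0,\theta^n}\to H_{\theta^0,\theta^0}\to H$ and the bounds \eqref{eq:assume-ntk-perturbations}, \eqref{eq:assume-initial-close-to-ntk} give the three perturbation terms, hence the constant $3c$.

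Your route (fundamental theorem of calculus on $\res$ and then expanding the square) produces the extra term $\tfrac12\|\res^{n+1}-\res^n\|_S^2$, and your plan to absorb it has a genuine gap. After your triangle inequality the remainder is
\[
\tfrac12\,\lr^2\Bigl(\|H\|_{S,0}+c\bigl[h+\lr\wnorm{\nabla_\theta\loss(\theta^n)}\bigr]^\alpha\Bigr)^2\|\res^n\|_0^2,
\]
and to fold this into $3c\,\lr\,[h+\lr\wnorm{\nabla_\theta\loss(\theta^n)}]^\alpha\|\res\|_S\|\res\|_0$ you would need
$\lr\bigl(\|H\|_{S,0}+c[\cdots]^\alpha\bigr)^2\|\res\|_0 \lesssim [h+\cdots]^\alpha\|\res\|_S$.
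Two problems: (i) the Lemma assumes only \eqref{eq:assume-ntk-perturbations} and \eqref{eq:assume-initial-close-to-ntk}; neither a bound on $\|H\|_{S,0}$ nor the learning–rate constraint \eqref{eq:abstract:lr} is among its hypotheses, so you cannot use them here (note the statement keeps $\lr\wnorm{\nabla_\theta\loss(\theta^n)}$ explicit precisely because \eqref{eq:abstract:lr} has not yet been invoked); (ii) even granting both, in the regime of the outer theorem one has $\lr\lesssim h\sqrt{m}\sim h^{-\alpha}$, so the left side is of order $h^{-\alpha}\|\res\|_0$, the wrong sign in the exponent, and no factor $\wnorm{\nabla_\theta\loss}$ ever appears in the remainder to ``feed into the bracket''. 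The mean–value–theorem argument sidesteps this obstacle entirely and delivers the exact constant $3c$.
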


\begin{proof}

With the gradient descent update $\theta^{n+1} = \theta^n - \Delta^n$ with $\Delta^n := \lr \nabla_\theta \loss(\theta^n)$, by the mean value theorem we have for some $\xi \in (0, 1)$
\begin{align*}
  \loss_S(\theta^{n+1}) - \loss_S(\theta^n)
  & = \loss_S(\theta^n - \Delta^n) - \loss_S(\theta^n)
  \\
  & = - \loss_S'(\theta^n - \xi \Delta^n) \Delta^n.
\end{align*}
Breaking up the derivative into partial derivatives $\partial_r = \partial_{\theta_r}$ and using that $\partial_r \loss_S(\theta) = \dualp{\res, \partial_r f_\theta }_{S}$ and the definition of $\Delta^n$, we obtain
\begin{align*}
  \loss_S(\theta^{n+1}) - \loss_S(\theta^n)
  & = - \lr \sum_r \dualp{\res, \partial_r f_{\theta^n - \xi \Delta^n} }_S \dualp{\res, \partial_r f_{\theta^n} } 
  \\
  & = - \lr \dualp{\res, \left[\sum_r (\partial_r f_{\theta^n - \xi \Delta^n}) (\partial_r f_{\theta^n})^* \right] \res }_S,
  \\
  & = - \lr \dualp{\res, H_{\theta^n - \xi \Delta^n, \theta^n} \res}_S
\end{align*}
where in the last step we have used the $\hs^0$ dual $v^* \res := \dualp{v, \res}$. Next, we add and subtract terms to compare $f_{\theta^n - \xi \Delta^n}$ and $f_{\theta^n}$ with the initial $f_{\theta^0}$ to obtain
\begin{align*}
  \loss_S(\theta^{n+1}) - \loss_S(\theta^n)
  & = - \lr \dualp{\res, H_{\theta^0, \theta^0} \res}_S
  \\
  & \quad + \lr \dualp{\res, H_{\theta^0, \theta^0}  - H_{\theta^0, \theta^n} \res}_S
  \\
  & \quad + \lr \dualp{\res, H_{\theta^0, \theta^n}  - H_{\theta^n - \xi \Delta^n, \theta^n} \res}_S.
\end{align*}
From assumption \eqref{eq:assume-ntk-perturbations}, with $\bar{h} = h$ and $\bar{h} = h + \wnorm{\Delta^n}$, respectively, we obtain
\begin{align*}
  \|H_{\theta^0, \theta^0}  - H_{\theta^0, \theta^n}\|_{S,0} & \le c h^\alpha, \\
  \|H_{\theta^0, \theta^n}  - H_{\theta^n - \xi \Delta^n, \theta^n}\|_{S,0} & \le c \left(h + \wnorm{\Delta^n} \right)^\alpha.
\end{align*}
Moreover, from assumption \eqref{eq:assume-initial-close-to-ntk} we have
\begin{equation*}
  - \dualp{\res, H_{\theta^0, \theta^0} \res}_S
  = - \dualp{\res, H \res}_S + \dualp{\res, H - H_{\theta^0, \theta^0} \res}_S
  \le - \dualp{\res, H \res}_S + h^\alpha \|\res\|_S \|\res\|_0
\end{equation*}
Combining the above inequalities, we arrive at
\begin{equation*}
  \loss_S(\theta^{n+1}) - \loss_S(\theta^n)
  \le - \lr \dualp{\res, H \res}_S + 3 c \lr \big[h + \wnorm{\Delta^n} \big]^\alpha \|\res\|_S \|\res\|_0,
\end{equation*}
which proves the lemma.
  
\end{proof}

\subsection{Auxiliary Results}

The following lemma contains a Grönwall type inequality to show convergence.

\begin{lemma} \label{lemma:sequence-bounds}

  Let $a,b,c,d > 0$ and $\rho > 1/2$. Let $x_n$ and $y_n$ be two sequences that satisfy
  \begin{equation}
    \begin{aligned} \label{eq:convergence:sequence-bounds}
      x_{n+1} - x_n & \le - \lr a x_n^{1+\rho} y_n^{-\rho} + \lr b x_n, \\
      y_{n+1} - y_n & \le - \lr c x_n^\rho y_n^{1-\rho} + \lr d \sqrt{x_n y_n}.
    \end{aligned}
  \end{equation}
  Furthermore, assume that
  \begin{align} \label{eq:convergence:condition}
    x_k & \ge \left(\frac{d}{c}\right)^{\frac{2}{2\rho -1}} y_0, &
    x_k & \ge \left(2 \frac{b}{a}\right)^{\frac{1}{\rho}} y_0, &
    \text{for all }k & = 0, \dots, n-1.
  \end{align}
  Then
  \begin{align*}
    x_n & \le e^{-\lr b n} x_0, &
    y_n & \le y_0.
  \end{align*}
  
\end{lemma}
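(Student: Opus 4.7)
The plan is a simple simultaneous induction on $n$, establishing both $x_n \le e^{-\lr b n} x_0$ and $y_n \le y_0$. The base case $n=0$ is trivial, so fix $n \ge 1$ and assume both statements hold up to $n-1$. Only the inductive bound $y_{n-1} \le y_0$, together with the two hypotheses in \eqref{eq:convergence:condition} at index $k=n-1$, will be needed to derive the one-step contractions $x_n \le e^{-\lr b} x_{n-1}$ and $y_n \le y_{n-1}$; chaining these with the inductive hypotheses then closes the induction.

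For the $y$-step, the second hypothesis $x_{n-1} \ge (d/c)^{2/(2\rho-1)} y_0$ combined with $y_{n-1} \le y_0$ gives $x_{n-1}/y_{n-1} \ge (d/c)^{2/(2\rho-1)}$, and raising to the power $\rho - 1/2 > 0$ rearranges to the equivalent inequality $c x_{n-1}^\rho y_{n-1}^{1-\rho} \ge d \sqrt{x_{n-1} y_{n-1}}$. Substituting into the second recursion in \eqref{eq:convergence:sequence-bounds} immediately yields $y_n - y_{n-1} \le 0$, hence $y_n \le y_0$.

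For the $x$-step, the first hypothesis $x_{n-1} \ge (2b/a)^{1/\rho} y_0 \ge (2b/a)^{1/\rho} y_{n-1}$ rearranges to $a x_{n-1}^\rho y_{n-1}^{-\rho} \ge 2b$. Substituting into the first recursion in \eqref{eq:convergence:sequence-bounds},
$$x_n - x_{n-1} \le -\lr a x_{n-1}^{1+\rho} y_{n-1}^{-\rho} + \lr b x_{n-1} \le -2 \lr b x_{n-1} + \lr b x_{n-1} = -\lr b x_{n-1},$$
so that $x_n \le (1 - \lr b) x_{n-1} \le e^{-\lr b} x_{n-1}$, and combining with the inductive bound on $x_{n-1}$ gives $x_n \le e^{-\lr b n} x_0$.

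The only real subtlety is algebraic: the exponent $2/(2\rho-1)$ is precisely why the assumption $\rho > 1/2$ is needed to separate the $x$ and $y$ scales in the second recursion, and the factor of $2$ in $(2b/a)^{1/\rho}$ in the first hypothesis is exactly what produces the surplus needed to dominate the perturbative $+\lr b x_n$ term and extract the clean decay rate $\lr b$. The scalar inequality $1 - \lr b \le e^{-\lr b}$ holds for all real arguments, so no explicit upper bound on the learning rate is needed at this abstract level, even though one will of course enter when the lemma is combined with the remaining hypotheses of Theorem \ref{th:general:convergence}.
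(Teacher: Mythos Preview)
Your proof is correct and follows essentially the same route as the paper's: both use the hypothesis $x_{n-1}\ge (d/c)^{2/(2\rho-1)}y_0$ together with $y_{n-1}\le y_0$ to make the right-hand side of the $y$-recursion nonpositive, and the hypothesis $x_{n-1}\ge (2b/a)^{1/\rho}y_0$ to turn the $x$-recursion into $x_n\le (1-\lr b)x_{n-1}\le e^{-\lr b}x_{n-1}$. The only cosmetic difference is that the paper first runs the full induction for $y$ and then for $x$, whereas you do both simultaneously; also note that what you call the ``first'' and ``second'' hypotheses are swapped relative to their order in \eqref{eq:convergence:condition}.
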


\begin{proof}

We first show that $y_{n+1} \le y_0$. By induction, assume this to be true for $y_n$. Then, with $\rho > 1/2$, the assumptions imply
\begin{align*}
  x_n & \ge \left(\frac{d}{c}\right)^{\frac{2}{2\rho -1}} y_0, &
  & \Rightarrow & 
  x_n & \ge \left(\frac{d}{c}\right)^{\frac{2}{2\rho -1}} y_n, &
  & \Leftrightarrow & 
  - \lr c x_n^\rho y_n^{1-\rho} + \lr d \sqrt{x_n y_n} & \le 0.
\end{align*}
Hence the bounds for $y_{n+1} - y_n$ in \eqref{eq:convergence:sequence-bounds} imply that $y_{n+1} \le y_n \le y_0$, which shows the first part of the lemma. 

Next, we estimate $x_{n+1}$ by induction. From the assumptions, we have
\begin{align*}
  x_k & \ge \left(2 \frac{b}{a}\right)^{\frac{1}{\rho}} y_0 &
  & \Leftrightarrow &
  a x_n^\rho y_0^{-\rho} \ge 2 b.
\end{align*}
Thus, from the sequence bounds \eqref{eq:convergence:sequence-bounds} and $y_n \le y_0$ we conclude that
\begin{align*}
  x_{n+1} - x_n & \le - \lr a x_n^{1+\rho} y_0^{-\rho} + \lr b x_n \\
  \Leftrightarrow x_{n+1} & \le \left( 1 - \lr a x_n^\rho y_0^{-\rho} + \lr b \right) x_n \\
  & \le \left( 1 - \lr b \right) x_n \\
  & \le e^{- \lr b} x_n \\
  & \le e^{- \lr b} e^{- \lr b n} x_0 \\
  & = e^{- \lr b (n+1)} x_0,
\end{align*}
where in the third but last step we have used $1+x \le e^x$ and in the second but last step the induction hypothesis.
  
\end{proof}

\subsection{Proof of Theorem \ref{th:general:convergence}}
\label{sec:abstract:convergence}

\begin{proof}[Proof of Theorem \ref{th:general:convergence}]

  We prove the result with Lemma \ref{lemma:loss-reduction} for which we have to control the weight distance $\wnorm{\theta^n - \theta^0}$ throughout the gradient descent iteration. Assume by induction that 
\begin{align*}
  \|\res^k\|_0^2 & \lesssim e^{-\lr h^\alpha k} \|\res^k\|_0^2 \\
h^k := \max_{l \le k} \wnorm{\theta^l - \theta^0} & \lesssim c_h m^{-\frac{1}{2} \frac{1}{1+\alpha}} =: h
\end{align*}
for all $k < n$. We prove the bounds for $k=n$. With assumptions \eqref{eq:assume-ntk-perturbations}, \eqref{eq:assume-initial-close-to-ntk}, we apply Lemma \ref{lemma:loss-reduction}and combined with coercivity \eqref{item:abstract:coercive} we obtain
\begin{align*}
  \|\res^n\|_0^2 - \|\res^0\|_0^2
  & \le - \lr \|\res^{n-1}\|_{-\ntks}^2 + 3 c \lr \big[h + \lr \wnorm{\nabla_\theta \loss(\theta^n)} \big]^\alpha \|\res^{n-1}\|_0^2.
  \\
  \|\res^n\|_\sm^2 - \|\res^0\|_\sm^2
  & \le - \lr \|\res^{n-1}\|_{\sm-\ntks}^2 + 3 c \lr \big[h + \lr \wnorm{\nabla_\theta \loss(\theta^n)} \big]^\alpha \|\res^{n-1}\|_\sm \|\res^{n-1}\|_0.
\end{align*}
In order to eliminate the $\|\cdot\|_{-\ntks}$ and $\|\cdot\|_{\sm-\ntks}$ norms, we use the interpolation inequalities
\begin{align*}
  \|\res\|_0 & \le \|\res\|_{-\ntks}^{\frac{\sm}{\ntks+\sm}} \|\res\|_\sm^{\frac{\beta}{\beta+\sm}} & 
  & \Rightarrow &
  \|\res\|_{-\ntks}^2 & \le \|\res\|_0^{2+ \frac{2\ntks}{\sm}} \|\res\|_\sm^{-\frac{2\ntks}{\sm}},
  \\
  \|\res\|_0 & \le \|\res\|_{\sm-\ntks}^{\frac{\sm}{\ntks}} \|\res\|_\sm^{\frac{\beta-\sm}{\ntks}} &
  & \Rightarrow &
  \|\res\|_{\sm-\ntks}^2 & \le \|\res\|_0^{\frac{2\ntks}{\sm}} \|\res\|_\sm^{2-\frac{2\ntks}{\sm}}.
\end{align*}
Together with the learning rate bound $\lr \wnorm{\nabla_\theta \loss(\theta^n)} \lesssim h$ from assumption \eqref{eq:abstract:lr}, we arrive at
\begin{align*}
  \|\res^n\|_0^2 - \|\res^0\|_0^2
  & \lesssim - \lr \|\res^{n-1}\|_0^{2+ \frac{2\ntks}{\sm}} \|\res^{n-1}\|_\sm^{-\frac{2\ntks}{\sm}} + \lr h^\alpha \|\res^{n-1}\|_0^2,
  \\
  \|\res^n\|_\sm^2 - \|\res^0\|_\sm^2
  & \lesssim - \lr \|\res^{n-1}\|_0^{\frac{2\ntks}{\sm}} \|\res^{n-1}\|_\sm^{2-\frac{2\ntks}{\sm}} + \lr h^\alpha \|\res^{n-1}\|_\sm \|\res^{n-1}\|_0.
\end{align*}
We now estimate $x_n := \|\res\|_0^2$ and $y_n := \|\res\|_\sm^2$ by Lemma \ref{lemma:sequence-bounds} with $\rho = \ntks/\sm$, $a=c=1$ and $b = d = h^\alpha$. To verify the lemma's assumption \eqref{eq:convergence:condition}, note that by
\begin{align*}
  \left(2-\frac{\sm}{\ntks}\right) & \le 2 &
  & \Leftrightarrow &
  \frac{\sm}{\ntks} & \le \frac{2\frac{\sm}{\ntks}}{2-\frac{\sm}{\ntks}} &
  & \Leftrightarrow &
  \frac{1}{\rho} & \le \frac{2}{2\rho - 1}
\end{align*}
so that together with assumption \eqref{eq:abstract:not-finished} we have
\begin{equation*}
  x^k = \|\res^k\|_0^2 
  \ge \left(m^{-\frac{1}{2} \frac{1}{1+\alpha}} \right)^{\alpha \frac{\sm}{\ntks}} \|\res^0\|_\sm^2  
  = h^{\alpha \frac{\sm}{\ntks}} \|\res^0\|_\sm^2  
  \gtrsim \left(2 \frac{b}{a} \right)^{\frac{1}{\rho}} y_0
  \gtrsim \left(\frac{d}{c} \right)^{\frac{2}{2\rho-1}} y_0.
\end{equation*}
Hence, Lemma \ref{lemma:sequence-bounds} implies 
\begin{align*}
  \|\res^n\|_0^2 & \lesssim e^{-\lr h^\alpha n} \|\res^0\|_0^2, & 
  \|\res^n\|_\sm^2 & \lesssim \|\res^0\|_\sm^2,
\end{align*}
which shows the first induction hypothesis. It remains to show that the weights stay close to their initial value
\begin{equation*}
  h^n 
  = \max_{k \le n} \wnorm{\theta^n - \theta^0} 
  \lesssim \frac{\lr}{\sqrt{m}} \sum_{k=1}^{n-1} \|\res^k\|_0
  \lesssim \frac{\lr}{\sqrt{m}} \sum_{k=1}^{n-1} e^{- \lr h^\alpha k} \|\res^0\|_0,
\end{equation*}
where in the second step we have used assumption \eqref{eq:abstract:weight-diff} and in the third step the induction hypothesis. Computing the geometric sum
\begin{equation*}
  \sum_{k=1}^{n-1} e^{-\lr h^\alpha k}
  \le \int_0^\infty e^{-\lr h^\alpha k} \, dk
  = \frac{1}{\lr h^\alpha},
\end{equation*}
we arrive at
\begin{equation*}
  h^n 
  \le c \frac{\lr}{\sqrt{m}} \frac{1}{\lr h^\alpha} \|\res^0\|_0
  = h
\end{equation*}
where we have used that by our choice of $h$ we have
\begin{align*}
  h & = c_h m^{-\frac{1}{2} \frac{1}{1+\alpha}} &
  & \Leftrightarrow &
  h & = \frac{4}{\sqrt{m}} m^{\frac{1}{2} \frac{\alpha}{1+\alpha}} \|\res^0\|_0
      = \frac{4}{\sqrt{m}} h^{-\alpha} \|\res^0\|_0
\end{align*}
for a suitable choice of $c_h$ dependent on $\|\res^0\|_0$. This shows the second induction hypothesis and concludes the proof.

\end{proof}

\section{Proof of Main Results: Shallow \texorpdfstring{$1d$}{1d}}
\label{sec:proof-shallow}

In this section, we proof Theorem \ref{th:1d:convergence} as a special case of Theorem \ref{th:general:convergence}. First we provide several lemmas that help us establish all assumptions.

\subsection{Weights Stay Close to Initial}

To show that weights do not move far from their initialization \eqref{eq:abstract:weight-diff} we use the following lemma.

\begin{lemma} \label{lemma:weight-distance}
  The gradient descent iterates $\theta^n$ with learning rate $\lr$ of the network \eqref{eq:setup:network} with $L_2(\dom)$ loss \eqref{eq:setup:loss} satisfy
  \begin{equation*}
    \|\theta^n - \theta^0\|_\infty \le \frac{2\lr}{\sqrt{m}} \sum_{k=0}^{n-1} \|\res^k\|_{L_2(\dom)}.
  \end{equation*}
\end{lemma}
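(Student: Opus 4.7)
The statement is a direct consequence of the chain rule, Cauchy--Schwarz, and the triangle inequality applied componentwise to the gradient descent recursion. First I would differentiate the network \eqref{eq:setup:network} with respect to a trained bias $\theta_r = b_r$, obtaining
\[
  \partial_{b_r} f_\theta(x) = -\frac{a_r}{\sqrt{m}} \dactivationp{x - b_r},
\]
so that $\partial_{b_r} \loss(\theta) = \dualp{\res, \partial_{b_r} f_\theta}_{L_2(\dom)}$ for the residual $\res = f_\theta - f$, with $a_r \in \{-1, +1\}$ constant throughout training.

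Next I would estimate each coordinate of the gradient uniformly. Using Cauchy--Schwarz, $|a_r| = 1$, the (ReLU-type) bound $|\dactivation| \le 1$ inherited from the setup of \cite{GentileWelper2022a}, and $|\dom| = 2$, I get
\[
  |\partial_{b_r}\loss(\theta)| \le \frac{1}{\sqrt{m}}\|\res\|_{L_2(\dom)} \|\dactivationp{\cdot - b_r}\|_{L_2(\dom)} \le \frac{\sqrt{2}}{\sqrt{m}}\|\res\|_{L_2(\dom)} \le \frac{2}{\sqrt{m}}\|\res\|_{L_2(\dom)},
\]
which holds uniformly in $r$ and hence gives $\|\nabla_\theta \loss(\theta)\|_\infty \le \tfrac{2}{\sqrt m}\|\res\|_{L_2(\dom)}$.

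Finally, telescoping the gradient descent update \eqref{eq:setup:gd} yields
\[
  \theta^n - \theta^0 = -\lr \sum_{k=0}^{n-1} \nabla_\theta \loss(\theta^k),
\]
and applying the triangle inequality in $\ell_\infty$ together with the previous coordinate bound (now with $\res = \res^k$) produces the claimed estimate. Since the argument treats each coordinate independently and invokes no NTK-type concentration, no induction or event-based reasoning is required, and the only point to pin down carefully is the constant coming from the activation derivative on $\dom = [-1, 1]$, which only affects the harmless factor $2$.
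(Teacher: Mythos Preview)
Your proposal is correct and essentially identical to the paper's own proof: both compute $\partial_{b_r} f_\theta$, bound $|\partial_r \loss(\theta)|$ via Cauchy--Schwarz using $|a_r|=1$ and $\|\dactivation(\cdot-b_r)\|_{L_2(\dom)}\le 2$, and then telescope the gradient descent updates componentwise. The only cosmetic difference is that you first record the sharper constant $\sqrt{2}$ before relaxing to $2$, whereas the paper uses $2$ directly.
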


\begin{proof}
 
We estimate each component $\theta_r$ of $\theta$ by the telescopic sum
\begin{multline*}
  |\theta_r^n - \theta_r^0|
  \le \sum_{k=0}^{n-1} |\theta_r^{k+1} - \theta_r^k|
  \le \sum_{k=0}^{n-1} |\lr \partial_r \loss(\theta^k)|
  \\
  \le \frac{\lr}{\sqrt{m}} \sum_{k=0}^{n-1} |\dualp{\res^k, a_r \dactivation(\cdot - b_r)}|
  \le \frac{2\lr}{\sqrt{m}} \sum_{k=0}^{n-1} \|\res^k\|_{L_2(\dom)},
\end{multline*}
where we have used that $a_r = \pm 1$ and $\|a_r \dactivation(\cdot - b_r)\|_{L_2(\dom)} \le 2$.

\end{proof}

\subsection{Results from \texorpdfstring{\cite{GentileWelper2022a}}{}}

This section summarizes some lemmas from \cite{GentileWelper2022a}, which proves gradient flow instead of gradient descent convergence. These will be used to establish assumptions of Theorem \ref{th:general:convergence}.

\begin{lemma}[{\cite[Lemma 5.5]{GentileWelper2022a}}] \label{lemma:ao:shallow:derivative-bounds}

  For the shallow network \eqref{eq:setup:network} and $\sm < \frac{1}{2}$, the partial derivatives $\partial_r f_\theta$ depend only on $\theta_r$ and we have $\|\partial_r f_\theta\|_\sm \le \frac{\mu}{\sqrt{m}}$ for some $\mu > 0$ independent of $m$.

\end{lemma}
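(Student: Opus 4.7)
My plan is to split the lemma into its two claims and handle each directly from the structure of the network. The ``depends only on $\theta_r$'' claim is immediate from \eqref{eq:setup:network}: only the $r$-th summand contains $b_r = \theta_r$, so a single differentiation gives
\begin{equation*}
  \partial_r f_\theta(x) = -\frac{a_r}{\sqrt{m}}\, \dactivationp{x - b_r},
\end{equation*}
whose right-hand side depends only on the (fixed) sign $a_r$ and on $\theta_r$ itself.

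With this formula in hand, and because $|a_r| = 1$, the quantitative claim reduces to establishing a uniform bound $\sup_{b \in \dom} \|\dactivationp{\cdot - b}\|_\sm \le \mu$ for some $\mu$ independent of $b$ and $m$. I would do this by expanding in the orthogonal basis $\{\phi_k\}$ used to define $\|\cdot\|_\sm$ and estimating each coefficient $\dualp{\phi_k, \dactivationp{\cdot - b}}$. In the $\relu$ setting of \cite{GentileWelper2022a}, $\dactivation$ is a bounded Heaviside step, so each such inner product is a sine integral over a subinterval of $\dom$, immediately yielding $|\dualp{\phi_k, \dactivationp{\cdot - b}}| \lesssim \omega_k^{-1}$ uniformly in $b \in \dom$. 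For a more regular activation derivative of bounded variation, the same $\omega_k^{-1}$ decay follows from one integration by parts. Substituting into the definition of the norm gives
\begin{equation*}
  \|\dactivationp{\cdot - b}\|_\sm^2 \lesssim \sum_{k=1}^\infty \omega_k^{2\sm} \cdot \omega_k^{-2} \sim \sum_{k=1}^\infty k^{2\sm - 2},
\end{equation*}
which is finite precisely when $2\sm - 2 < -1$, i.e.\ $\sm < 1/2$. Since all estimates are uniform in $b$, dividing by $\sqrt{m}$ concludes the desired bound.

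The main obstacle — and in fact the only place where the hypothesis $\sm < 1/2$ is actually consumed — is the critical-index character of this series. Because $\dactivation$ is at best of bounded variation (a jump for $\relu$), the shifted function $\dactivationp{\cdot - b}$ sits on the boundary of the Sobolev-like scale at smoothness $1/2$; any index $\sm \ge 1/2$ would make the coefficient sum diverge, so no uniform bound of this form could hold and the lemma would fail. Everything else in the argument is routine: the uniformity in $b$ comes from the fact that the coefficient estimate depends only on the total variation of $\dactivation$ (not on the translation parameter), and the final $1/\sqrt{m}$ factor is inherited directly from the $1/\sqrt{m}$ normalization of \eqref{eq:setup:network}.
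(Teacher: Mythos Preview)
Your argument is correct. The paper does not actually prove this lemma; it is quoted verbatim from \cite[Lemma 5.5]{GentileWelper2022a}, so there is no in-paper proof to compare against. Your direct computation --- differentiate the single summand, reduce to a uniform bound on $\|\dactivationp{\cdot-b}\|_\sm$, and obtain the $\omega_k^{-1}$ coefficient decay from bounded variation of $\dactivation$ --- is exactly the natural route and matches how the result is established in the cited reference, where the $\phi_k$ arise as eigenfunctions of the NTK integral operator and the same coefficient estimate is used. The identification of $\sm<1/2$ as the sharp threshold for summability of $\sum_k \omega_k^{2\sm-2}$ is also correct and is precisely why the hypothesis appears.
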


\begin{lemma}[{\cite[Lemma 5.7]{GentileWelper2022a}}] \label{lemma:ao:shallow:perturbation-local}

  For the shallow network \eqref{eq:setup:network}, let the weights $\theta \in \wdom$, be i.i.d. uniformly distributed on $\wdom$ and assume that $0 \le \sm < \frac{1}{2}$. Then for any $h \ge 0$, with probability at least $1 - \frac{c}{h} e^{-2m \wdiff^2}$, we have
  \[
    \sup_{\|\nu\|_\infty \le 1} \left\| \sum_{r=1}^m (\partial_r f_\theta - \partial_r f_\ptheta) \nu_r \right\|_\sm \le c \sqrt{m} \wdiff^{1-\sm}
  \]
  for all $\ptheta \in \wdom$ with $\|\theta - \ptheta\|_\infty \le \wdiff$ and some constant $c > 0$ independent of $m$.

\end{lemma}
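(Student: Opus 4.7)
The plan is a pointwise estimate combined with a covering argument over $\dom$ for the empirical concentration of the biases. Rewrite the target quantity as
\[
  g(x) := \sum_{r=1}^m (\partial_r f_\theta(x) - \partial_r f_\ptheta(x))\nu_r = \frac{-1}{\sqrt m}\sum_{r=1}^m a_r \nu_r \bigl[\dactivationp{x - b_r} - \dactivationp{x - \bar b_r}\bigr].
\]
For $\relu$-type activations, each summand is supported on the interval $I_r$ between $b_r$ and $\bar b_r$, of length at most $\wdiff$, with absolute value bounded by $1/\sqrt m$. By the triangle inequality, $|g(x)| \le \tfrac{1}{\sqrt m} N(x)$ for $N(x) := |\{r : |b_r - x| \le \wdiff\}|$. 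Crucially, this bound holds simultaneously over all $\ptheta$ with $\|\theta - \ptheta\|_\infty \le \wdiff$ (since enlarging $|\bar b_r - b_r|$ only enlarges $I_r$) and over all $\|\nu\|_\infty \le 1$, so the supremum in the lemma statement is automatically handled and no Rademacher-type cancellation in $a_r$ is needed.

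Next I would establish $\sup_{x \in \dom} N(x) \lesssim m\wdiff$ with the stated probability. For fixed $x$, $N(x)$ is binomial with mean $\sim m \wdiff$, so Hoeffding's inequality gives $\Pr[N(x) \ge C m \wdiff] \le e^{-2m \wdiff^2}$ for sufficiently large $C$. Covering $\dom = [-1,1]$ at spacing $\wdiff$ by $\mathcal{O}(1/\wdiff)$ points and union bounding yields the uniform control with the claimed failure probability $\frac{c}{\wdiff} e^{-2m \wdiff^2}$. On this event $\|g\|_{L_\infty(\dom)} \lesssim \sqrt m\, \wdiff$, which immediately gives $\|g\|_0 \lesssim \sqrt m\, \wdiff$ since $|\dom|$ is finite, settling the case $\sm = 0$.

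For $0 < \sm < 1/2$ I would control the Sobolev seminorm via the Gagliardo representation $|g|_{H^\sm}^2 \sim \iint_{\dom \times \dom} |g(x) - g(y)|^2 |x-y|^{-1-2\sm}\,dx\,dy$. Since $g$ is piecewise constant with jumps only at the endpoints of the $I_r$, and the uniform bound on $N$ implies that any window of length $\wdiff$ contains at most $\mathcal{O}(m \wdiff)$ jumps each of height $\mathcal{O}(1/\sqrt m)$, splitting the double integral at scale $|x-y| \sim \wdiff$ and estimating each piece using $\|g\|_\infty \lesssim \sqrt m\, \wdiff$ together with the convergent integral $\int_\wdiff^\infty t^{-1-2\sm}\,dt \sim \wdiff^{-2\sm}$ yields the target $|g|_{H^\sm}^2 \lesssim m \wdiff^{2(1-\sm)}$. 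The main technical obstacle is precisely this Sobolev estimate: the jump structure must be balanced against the divergence of the Gagliardo kernel near the diagonal, and the threshold $\sm < 1/2$ is sharp because step functions fail to lie in $H^{1/2}$, so no direct extension past this range is available.
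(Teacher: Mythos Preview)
The paper does not prove this lemma; it is quoted verbatim from \cite[Lemma~5.7]{GentileWelper2022a}, so there is no in-paper argument to compare against. I will therefore assess your sketch on its own merits.

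Your overall plan is the right one, and the $L_\infty$ part is essentially correct. Two minor points: (i) controlling $N(x)$ at the $O(1/h)$ grid points does not literally control $\sup_x N(x)$; you need to bound $N_{2h}$ at the grid and use that $|x-x_j|\le h$ forces $\{r:|b_r-x|\le h\}\subset\{r:|b_r-x_j|\le 2h\}$, which costs only constants; (ii) the norm $\|\cdot\|_\sm$ in this paper is the spectral norm built from the basis $\phi_k$, not literally the Gagliardo seminorm, so you are implicitly invoking the equivalence $\|\cdot\|_\sm \sim \|\cdot\|_{H^\sm(\dom)}$ stated in Section~2.1.

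The genuine gap is in the fractional estimate. Your text says you estimate \emph{each piece} of the split integral using $\|g\|_\infty\lesssim\sqrt{m}\,h$, but this only handles the region $|x-y|\ge h$: there $\int_h^\infty t^{-1-2\sm}\,dt\sim h^{-2\sm}$ gives the claimed $m h^{2(1-\sm)}$. For $|x-y|=t<h$ the kernel $t^{-1-2\sm}$ is \emph{not} integrable at $0$, so the same bound yields $+\infty$. The missing ingredient is an estimate that produces an extra factor of $t$. Writing $g=\tfrac{1}{\sqrt m}\sum_r c_r\,\mathbb{1}_{I_r}$ with $|I_r|\le h$, set $M(x,t):=\sum_r|\mathbb{1}_{I_r}(x)-\mathbb{1}_{I_r}(x+t)|$. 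Then $|g(x)-g(x+t)|\le m^{-1/2}M(x,t)$, and one has the two complementary bounds
\[
\|M(\cdot,t)\|_{L_\infty}\le N(x)+N(x+t)\lesssim mh,
\qquad
\|M(\cdot,t)\|_{L_1}=\sum_r 2\min(t,|I_r|)\le 2mt,
\]
the first from your concentration event, the second by direct computation. Combining them,
\[
\int_\dom |g(x)-g(x+t)|^2\,dx
\le \tfrac{1}{m}\,\|M(\cdot,t)\|_{L_\infty}\,\|M(\cdot,t)\|_{L_1}
\lesssim m\,h\,t,
\]
so the near-diagonal contribution is $\int_0^h m\,h\,t\cdot t^{-1-2\sm}\,dt\sim m\,h^{2-2\sm}$, matching the far-field piece. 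With this addition your argument closes and yields the stated $\sqrt{m}\,h^{1-\sm}$; without it the proposal does not establish the $H^\sm$ bound.
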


For the following results, we use the induced operator norm $\|H\|_{S,0}$ for $H: \hs^0 \to \hs^S$. Note that in the cited papers use the notation $\|H\|_{0,S}$, instead.

\begin{lemma}[{Analogous to \cite[Lemma 4.3]{GentileWelper2022a}}] \label{lemma:ao:perturbation}
  Assume there are constants $\alpha, \mu, L \ge 0$ so that for $S=0$ and $S=\sm$ and all $\theta^0, \ptheta, \pptheta \in \wdom$ with $\|\theta - \ptheta\|_\infty \lesssim h$ we have
  \begin{align*}
  \|\partial_r f_{\pptheta}\|_S & \le \frac{\mu}{\sqrt{m}}, &
  \sup_{\|\nu\|_\infty \le 1} \left\| \sum_{r=1}^m (\partial_r f_{\theta^0} - \partial_r f_\ptheta) \nu_r \right\|_S & \le \sqrt{m} L h^\alpha
  \end{align*}
  Then
  \begin{align*}
    \|H_{\pptheta, \theta^0}  - H_{\pptheta, \ptheta}\|_{S,0} & \le \mu L h^\alpha, & 
    \|H_{\theta^0, \pptheta}  - H_{\ptheta, \pptheta}\|_{S,0} & \le \mu L h^\alpha.
  \end{align*}
\end{lemma}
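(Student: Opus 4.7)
The plan is to bound each operator norm $\|H_{\cdot,\cdot} - H_{\cdot,\cdot}\|_{S,0}$ by applying the difference to a test $v \in \hs^0$ with $\|v\|_0 \le 1$ and estimating the result in $\hs^S$. Each of the two differences is a sum of rank-one operators sharing a common factor on one side and the perturbed factor $\partial_r f_{\theta^0} - \partial_r f_\ptheta$ on the other. The first hypothesis of the lemma controls the common factor pointwise in $r$ by $\mu/\sqrt{m}$, while the second hypothesis controls the perturbed factor only after summing against coefficients $\nu$ with $\|\nu\|_\infty \le 1$. The whole proof amounts to choosing such coefficients so that both hypotheses can be applied in tandem.

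\textbf{First bound.} Apply $H_{\pptheta,\theta^0} - H_{\pptheta,\ptheta}$ to $v$, obtaining $\sum_r (\partial_r f_{\pptheta})\, \dualp{\partial_r f_{\theta^0} - \partial_r f_\ptheta,\, v}_0$. The triangle inequality in $\hs^S$ together with $\|\partial_r f_{\pptheta}\|_S \le \mu/\sqrt{m}$ reduces the problem to bounding $\sum_r |\dualp{\partial_r f_{\theta^0} - \partial_r f_\ptheta,\, v}_0|$ by $\sqrt{m}\, L h^\alpha$. For this, take $\nu_r := \operatorname{sign}\dualp{\partial_r f_{\theta^0} - \partial_r f_\ptheta,\, v}_0 \in \{-1,0,1\}$, rewrite the sum by linearity as $\dualp{\sum_r \nu_r(\partial_r f_{\theta^0} - \partial_r f_\ptheta),\, v}_0$, apply Cauchy--Schwarz in $\hs^0$, and invoke the second hypothesis with $S=0$. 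This yields $\sqrt{m}\, L h^\alpha \|v\|_0$, and restoring the $\mu/\sqrt{m}$ prefactor gives $\mu L h^\alpha$.

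\textbf{Second bound.} Apply $H_{\theta^0,\pptheta} - H_{\ptheta,\pptheta}$ to $v$, obtaining $\sum_r (\partial_r f_{\theta^0} - \partial_r f_\ptheta)\, \mu_r$ with $\mu_r := \dualp{\partial_r f_{\pptheta},\, v}_0$. Here the roles of the two hypotheses swap: Cauchy--Schwarz in $\hs^0$ and the first hypothesis give $|\mu_r| \le (\mu/\sqrt{m})\|v\|_0$ pointwise, while the sum itself must now be measured in $\hs^S$. Rescale to $\nu_r := \mu_r \sqrt{m}/(\mu \|v\|_0)$ so that $\|\nu\|_\infty \le 1$, apply the second hypothesis at the target level $S$ directly to $\sum_r \nu_r(\partial_r f_{\theta^0} - \partial_r f_\ptheta)$, and undo the rescaling to obtain $\mu L h^\alpha \|v\|_0$.

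\textbf{Obstacle.} There is no serious analytic obstacle; the argument is bookkeeping around the rank-one structure of $H_{\theta,\ptheta}$. The only care needed is to recognize which side of each rank-one tensor pairs against $v$: this dictates which level ($S=0$ or $S$) of the second hypothesis is to be used, and which side of the Cauchy--Schwarz inequality absorbs the $1/\sqrt{m}$ factor. Edge cases such as $v = 0$ or $\mu_r = 0$ are trivial and may be suppressed in the writeup.
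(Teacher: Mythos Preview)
Your proof is correct and is essentially the natural argument: exploit the rank-one structure of $H_{\theta,\ptheta}$, use the pointwise bound $\|\partial_r f_{\pptheta}\|_S \le \mu/\sqrt{m}$ on the un-perturbed leg, and package the perturbed leg via a sign choice (first bound) or a rescaling (second bound) so that the supremum hypothesis applies. The paper itself does not give a self-contained proof here but defers to the $S_1$ and $S_2$ estimates in \cite[Lemma~4.3]{GentileWelper2022a}; your argument spells out exactly that computation and in fact gives the constant $\mu L h^\alpha$ without the stray factor~$2$ that appears in the paper's proof text.
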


\begin{proof}
  The proof of the first inequality $\|H_{\pptheta, \theta^0}  - H_{\pptheta, \ptheta}\|_{S,0} \le 2 \mu L h^\alpha$, is identical to the bounds for $S_1$ in the proof of \cite[Lemma 4.3]{GentileWelper2022a}, with the only difference that in the latter $\ptheta = \pptheta$. Likewise, the bounds for the second inequality $\|H_{\theta^0, \pptheta}  - H_{\ptheta, \pptheta}\|_{S,0} \le 2 \mu L h^\alpha$ is identical to $S_2$ in the reference.

\end{proof}

\begin{lemma}[{\cite[Lemma 4.2]{GentileWelper2022a}}] \label{lemma:ao:initial-concentration}

  Assume that for the shallow network \eqref{eq:setup:network} the partial derivatives $\partial_r f_\theta$, $r=1, \dots, m$ only depend on the single weight $\theta_r$ and that $\|\partial_r f_\theta\|_S \le \frac{\mu}{\sqrt{m}}$ for $S \in \{0, \sm\}$, $\sm \in \real$. Then for independently sampled initial weights $\theta_r$ and all $\tau > 0$, we have
  \[
    \pr{ \|H_{\theta,\theta} - H\|_{S,0} \ge \sqrt{\frac{8 \mu^4 \tau}{m}} + \frac{2 \mu^2 \tau}{3m}}
    \le 2 \tau \left(e^\tau - \tau - 1 \right)^{-1}.
  \]

\end{lemma}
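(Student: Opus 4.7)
The statement is a Bernstein-type concentration inequality for the empirical operator $H_{\theta,\theta} = \sum_{r=1}^m (\partial_r f_\theta)(\partial_r f_\theta)^*$ around its expectation $H$, and the deviation bound has the characteristic form ``$\sqrt{\text{variance}/m} + \text{(a.s.\ bound)}/m$'' of a Bernstein inequality. My plan is to cast the problem as a sum of independent centered operator-valued random variables and invoke a Hilbert/Banach-space Bernstein or Pinelis inequality.

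First, I would exploit the hypothesis that $\partial_r f_\theta$ depends only on the individual weight $\theta_r$. Since the $\theta_r$ are sampled independently, the rank-one operators $X_r := (\partial_r f_\theta)(\partial_r f_\theta)^* \colon \hs^0 \to \hs^S$ are mutually independent, and $H_{\theta,\theta} - H = \sum_r (X_r - \mathbb{E} X_r)$ is a centered sum of independent random operators measured in the norm $\|\cdot\|_{S,0}$. The rank-one structure combined with the assumed derivative bound gives the a.s.\ estimate
\begin{equation*}
\|X_r\|_{S,0} \le \|\partial_r f_\theta\|_S \|\partial_r f_\theta\|_0 \le \mu^2/m,
\qquad \|X_r - \mathbb{E} X_r\|_{S,0} \le 2\mu^2/m,
\end{equation*}
so $b := 2\mu^2/m$ is the uniform bound on each summand, and by the same reasoning $\sum_r \mathbb{E}\|X_r - \mathbb{E} X_r\|_{S,0}^2 \le \mu^4/m =: v$ is the total variance.

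Second, I would feed $(b,v)$ into a Bennett/Bernstein type MGF inequality of the form $\log \mathbb{E} e^{\lambda \|H_{\theta,\theta} - H\|_{S,0}} \le (v/b^2)(e^{\lambda b} - \lambda b - 1)$, valid for independent sums in a sufficiently smooth Banach space. A Chernoff-bound inversion with the substitution $\tau = \lambda b$ and the deviation $t = \sqrt{8v\tau} + (2/3) b \tau = \sqrt{8\mu^4 \tau/m} + 2\mu^2 \tau/(3m)$ splits the Cramér transform of $\psi(\tau) := e^\tau - \tau - 1$ into its two regimes (sub-Gaussian for small $\tau$ and sub-exponential for large $\tau$) and, after a Markov-type inversion, produces precisely the stated failure probability $2\tau/(e^\tau - \tau - 1)$.

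The main obstacle I anticipate is the asymmetric operator setting: $X_r$ maps $\hs^0$ into $\hs^S$ with $S \neq 0$ in general, so the $X_r$ are not self-adjoint on a single Hilbert space and the standard matrix/operator Bernstein statements do not apply verbatim. Two routes bypass this: (i) dilate each $X_r$ to a self-adjoint operator on $\hs^0 \oplus \hs^S$ and apply a self-adjoint operator Bernstein inequality, or (ii) work directly in a Banach-space Bernstein framework (Pinelis' inequality for 2-smooth spaces), exploiting the fact that the rank-one structure makes $\|X_r\|_{S,0}$ factor as the product of Hilbert norms $\|\partial_r f_\theta\|_S \|\partial_r f_\theta\|_0$. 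A secondary, mostly cosmetic, issue is that $\hs^S$ may be infinite dimensional, so a genuine operator-valued concentration result is needed rather than a finite-dimensional matrix version.
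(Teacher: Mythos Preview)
The paper does not prove this lemma; it is quoted verbatim from \cite[Lemma 4.2]{GentileWelper2022a} and used as a black box, so there is no ``paper's own proof'' to compare against. That said, your plan is exactly the argument one expects in the cited reference: write $H_{\theta,\theta}-H$ as a sum of $m$ independent centered rank-one operators with a.s.\ bound $2\mu^2/m$ and total variance proxy $\mu^4/m$, then apply a Hilbert-space Bernstein/Pinelis inequality to obtain the Bennett-shaped tail $2\tau(e^\tau-\tau-1)^{-1}$ at deviation $\sqrt{8\mu^4\tau/m}+2\mu^2\tau/(3m)$.

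Regarding your stated obstacle, the cleanest resolution is not a self-adjoint dilation but to work in the Hilbert space of Hilbert--Schmidt operators from $\hs^0$ to $\hs^S$: for the rank-one summands $X_r=(\partial_r f_\theta)(\partial_r f_\theta)^*$ the Hilbert--Schmidt norm equals $\|\partial_r f_\theta\|_S\|\partial_r f_\theta\|_0\le\mu^2/m$, identical to your operator-norm bound, and the Hilbert--Schmidt norm dominates $\|\cdot\|_{S,0}$. This sidesteps the $2$-smoothness issue for the operator norm (which indeed fails in general) and lets Pinelis' inequality apply directly in a genuine Hilbert space, with no loss in the constants.
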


\subsection{Proof of Main Result}
\label{sec:1d:convergence}

\begin{proof}[Proof of Theorem \ref{th:1d:convergence}]

The result follows from Theorem \ref{th:general:convergence}, with assumptions satisfied as follows.
\begin{enumerate}

  \item Coercivity with $\ntks=1$ is shown in \cite[Section 5.5, Proof of Theorem 5.1, Item 3]{GentileWelper2022a}.

  \item With $\wnorm{\cdot} = \|\cdot\|_\infty$, by Lemma \ref{lemma:weight-distance} we have
  \begin{equation*}
    \|\theta^n - \theta^0\|_\infty \le \frac{2\lr}{\sqrt{m}} \sum_{k=0}^{n-1} \|\res^k\|_{L_2(\dom)}
  \end{equation*}
  so that \eqref{eq:abstract:weight-diff} is satisfied.

\item Since $\|\partial_r f_\theta\|_S \le \frac{\mu}{\sqrt{m}}$ by Lemma \ref{lemma:ao:shallow:derivative-bounds} and $\lr \lesssim \mu^{-1} h \sqrt{m}$ by assumption, we obtain \eqref{eq:abstract:lr}
  \begin{equation*}
    \gamma \|\partial_r f_\theta\|_S 
    \le \left( \mu^{-1} h^\alpha \sqrt{m} \right) \left(\frac{\mu}{\sqrt{m}}\right)
    = h^\alpha.
  \end{equation*}

  \item By Lemmas \ref{lemma:ao:shallow:derivative-bounds} and \ref{lemma:ao:shallow:perturbation-local}, with probability at least $1 - \frac{c}{h} e^{-2m \wdiff^2}$ all assumptions of Lemma \ref{lemma:ao:perturbation} are satisfied, which directly implies the perturbation assumption \eqref{eq:assume-ntk-perturbations}, with $\alpha = 1-\sm$.

  \item Our choice of $\tau = h^{2\alpha} m$ implies $\sqrt{\frac{\tau}{m}} \le h^\alpha \lesssim 1$ so that from Lemmas \ref{lemma:ao:shallow:derivative-bounds}, \ref{lemma:ao:initial-concentration} with probability at least $1 - 2 \tau \left(e^\tau - \tau - 1 \right)^{-1}$ we have
  \[
    \|H_{\theta,\theta} - H\|_{S,0} \le \sqrt{\frac{8 \mu^4 \tau}{m}} + \frac{2 \mu^2 \tau}{3m} \lesssim h^\alpha,
  \]
  which shows the initial concentration \eqref{eq:assume-initial-close-to-ntk}.
\end{enumerate}
The random events (w.r.t. initialization) in the last two items are satisfied, with high probability by a union bound. In this case, all assumptions of Theorem \ref{th:general:convergence} are true and the result follows, with $\alpha = 1-\sm$.

\end{proof}

\section{Proof of Main Results: Deep \texorpdfstring{$nd$}{nd}}
\label{sec:proof-deep}

In this section, we proof Theorem \ref{th:deep:convergence} as a special case of Theorem \ref{th:general:convergence}. First we provide several lemmas that help us establish all assumptions.

\subsection{Setup}

We denote the weights on layer $\ell$ at gradient descent step $n$ by $W^\ell(n)$ and we repeatedly use the properties
\begin{equation} \label{eq:deep:assumption:activation-growth}
  |\activationp{x}| \lesssim |x|,
\end{equation}
\begin{equation} \label{eq:deep:assumption:activation-lipschitz}
  |\activationp{x} - \activationp{\px}| \lesssim |x - \px|
\end{equation}
\begin{equation} \label{eq:deep:assumption:dactivation-bounded}
  |\dactivationp{x}| \lesssim 1.
\end{equation}
of the activation functions. Instead of $H_{\ptheta, \pptheta}$ for the shallow case, we use the corresponding integral kernels, or empirical NTKs
\begin{equation*}
  \entk(x,y) := \sum_{|\w| = L-1} \partial_\w f_r^{L+1}(x) \partial_\w f_r^{L+1}(y).
\end{equation*}
and define the NTK by the infinite width limit (with random weights at initialization)
\begin{equation*}
  \ntk(x,y) := \lim_{width \to \infty} \sum_{|\w| = L-1} \partial_\w f_r^{L+1}(x) \partial_\w f_r^{L+1}(y).
\end{equation*}
The induced integral operators $H\res = \int_\dom \ntk(\cdot, y) \res(y) \, dy$ and $\hat{H} \res$ correspond to the operators used for the shallow case. Unlike $H$ and $\hat{H}$, we analyze $\ntk$ and $\entk$ in Hölder-norms $\|\cdot\|_\CHHN{\sm}{\tm}$ with $\sm$ and $\tm$ Hölder continuity in $x$ and $y$, respectively. See \cite[Section 6.1]{Welper2023a} for rigorous definitions.

\subsection{Weights Stay Close to Initial}

To show that weights do not move far from their initialization \eqref{eq:abstract:weight-diff} we use the following results. To this end, let $\|v\|_{C^0(\dom)}$ and $\|W\|_{C^0(\dom)}$ be the maximum norm of vector and matrix valued functions $v(x)$ and $W(x)$ with Euclidean and spectral norm for the respective image spaces.

\begin{lemma}[{\cite[Lemma 5.18, special case for last layer $\ell=L+1$]{Welper2023a}}] \label{lemma:deep:partial-bound}
  Assume that $\activation$ satisfies the growth and derivative bounds \eqref{eq:deep:assumption:activation-growth}, \eqref{eq:deep:assumption:dactivation-bounded} and may be different in each layer. Assume the weights are bounded $\|W^\ell\| m_\ell^{-1/2} \lesssim 1$, $\ell=1, \dots, L$. Then
  \begin{equation*}
    \left\| \partial_{W^\ell} f^{L+1} \right\|_\CND{0}{\dom} 
    \lesssim \left(\frac{m_0}{m_\ell}\right)^{1/2}.
  \end{equation*}
  
\end{lemma}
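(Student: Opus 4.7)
The plan is to express $\partial_{W^\ell} f^{L+1}$ as a rank-one outer product via backpropagation, then bound the forward-activation factor and the backward-Jacobian factor separately using the weight and activation hypotheses. Since $f^{L+1}$ is scalar-valued and $f^{\ell+1} = W^\ell m_\ell^{-1/2} \activationp{f^\ell}$, the chain rule gives
\[
  \partial_{W^\ell} f^{L+1}(x)
  = J^{\ell+1}(x)^\top \bigl(m_\ell^{-1/2}\activationp{f^\ell(x)}\bigr)^\top,
\]
where $J^{\ell+1}(x) := \partial f^{L+1}/\partial f^{\ell+1}$ is the telescoping product $\prod_{k=L}^{\ell+1} W^k m_k^{-1/2} D^k(x)$ with $D^k(x)$ the diagonal matrix of entries $\dactivationp{f^k_j(x)}$, and the product is ordered with $k=L$ on the left. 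Because this matrix has rank one, its spectral norm equals the product of the Euclidean norms of the two vector factors, so at each $x$
\[
  \|\partial_{W^\ell} f^{L+1}(x)\|
  = \|J^{\ell+1}(x)\| \cdot m_\ell^{-1/2}\|\activationp{f^\ell(x)}\|.
\]

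Next I would control the forward pass by a simple induction across layers. Using $|\activationp{t}|\lesssim|t|$, the base case at layer $1$ uses that $V$ has orthonormal columns and $\|x\|=1$ on $\dom=\Sd$ to get $\|f^1(x)\|\lesssim \sqrt{m_0}$, and the step $\|f^{\ell+1}\|\le \|W^\ell\|m_\ell^{-1/2}\|\activationp{f^\ell}\|\lesssim\|f^\ell\|$ under the weight hypothesis propagates the $\sqrt{m_0}$ bound through all hidden layers. Hence $m_\ell^{-1/2}\|\activationp{f^\ell(x)}\|\lesssim (m_0/m_\ell)^{1/2}$.

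For the backward factor I would use $\|D^k(x)\|\le\sup|\dactivation|\lesssim 1$ together with the same weight bound and submultiplicativity of the spectral norm to obtain
\[
  \|J^{\ell+1}(x)\|
  \le \prod_{k=\ell+1}^L \|W^k\| m_k^{-1/2}\|D^k(x)\|
  \lesssim 1.
\]
Multiplying these two estimates yields $\|\partial_{W^\ell} f^{L+1}(x)\|\lesssim (m_0/m_\ell)^{1/2}$ pointwise, and taking the supremum over $x\in\dom$ delivers the $\CND{0}{\dom}$ bound. There is no deep obstacle here; the main task is the bookkeeping that each $\|W^k\|$ pairs with its own $m_k^{-1/2}$ to give an $O(1)$ factor, so that the only uncancelled normalization is the $m_\ell^{-1/2}$ from differentiating $W^\ell$, which combines with the $\sqrt{m_0}$ coming from the initial layer to produce the claimed ratio.
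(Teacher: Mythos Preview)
Your argument is correct and is exactly the standard backpropagation decomposition that underlies the cited result \cite[Lemma 5.18]{Welper2023a}; the present paper gives no separate proof, only the citation. One minor bookkeeping remark: to obtain $\|f^1(x)\|\lesssim\sqrt{m_0}$ you implicitly also use $\|W^0\|m_0^{-1/2}\lesssim 1$, whereas the lemma as stated lists only $\ell=1,\dots,L$; this is a harmless omission in the statement rather than a gap in your proof, since the same bound on $W^0$ is needed and available everywhere the lemma is invoked.
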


\begin{lemma} \label{lemma:deep:close-to-initial}
  
  Assume that $\activation$ satisfies the growth and derivative bounds \eqref{eq:deep:assumption:activation-growth}, \eqref{eq:deep:assumption:dactivation-bounded} and may be different in each layer. Assume the weights are defined by gradient descent \eqref{eq:setup:gd} and satisfy
  \begin{align*}
    \|W^\ell(0)\| m_\ell^{-1/2} & \lesssim 1, &
    \ell & =1, \dots, L, &
    \\
    \|W^\ell(k) - W^\ell(0)\| m_\ell^{-1/2} & \lesssim 1, &
    0 & \le k < n.
  \end{align*}
  Then
  \begin{equation*}
    \left\|W^\ell(n) - W^\ell(0)\right\| m_\ell^{-1/2}
    \lesssim \lr \frac{m_0^{1/2}}{m_\ell} \sum_{k=0}^{n-1} \|\res^k\|_{\CND{0}{\dom}'},
  \end{equation*}
  where $\CND{0}{\dom}'$ is the dual space of $\CND{0}{\dom}$.

\end{lemma}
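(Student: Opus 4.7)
The plan is to obtain the bound by telescoping the gradient descent updates and, for each individual step, combining Lemma \ref{lemma:deep:partial-bound} with a $\CND{0}{\dom}/\CND{0}{\dom}'$ duality pairing to bound the spectral norm of the gradient.

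First, I would write out $\nabla_{W^\ell}\loss(\theta^k)$ explicitly. Since $\loss(\theta)=\frac{1}{2}\|f_\theta-f\|_{L_2(\dom)}^2$ and $f^{L+1}$ is scalar-valued, differentiating entrywise in $W^\ell$ and reassembling into a matrix gives
\[
\nabla_{W^\ell}\loss(\theta^k) \;=\; \int_\dom \res^k(x)\, \partial_{W^\ell} f^{L+1}(x;\theta^k)\,dx,
\]
so the gradient descent step reads $W^\ell(k+1)-W^\ell(k)=-\lr \int_\dom \res^k(x)\,\partial_{W^\ell}f^{L+1}(x;\theta^k)\,dx$.

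Second, I would control the spectral norm of this matrix by duality. For any unit vectors $u,v$, the scalar function $x\mapsto u^T \partial_{W^\ell}f^{L+1}(x)v$ has supremum norm at most $\|\partial_{W^\ell}f^{L+1}\|_{\CND{0}{\dom}}$ (taking pointwise spectral norms). By definition of the dual norm,
\[
u^T\!\!\left[\int_\dom \res^k(x)\,\partial_{W^\ell}f^{L+1}(x)\,dx\right]\! v \;\le\; \|\res^k\|_{\CND{0}{\dom}'}\,\|\partial_{W^\ell}f^{L+1}\|_{\CND{0}{\dom}},
\]
and taking the supremum over $u,v$ yields the same bound on the spectral norm of the gradient. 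To apply Lemma \ref{lemma:deep:partial-bound} here I need $\|W^\ell(k)\|m_\ell^{-1/2}\lesssim 1$ for every $k<n$; this is immediate from the hypothesis by the triangle inequality $\|W^\ell(k)\|\le\|W^\ell(0)\|+\|W^\ell(k)-W^\ell(0)\|$ after rescaling by $m_\ell^{-1/2}$. The lemma then gives $\|\partial_{W^\ell}f^{L+1}\|_{\CND{0}{\dom}}\lesssim(m_0/m_\ell)^{1/2}$.

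Combining the last two steps yields a per-step bound
\[
\|W^\ell(k+1)-W^\ell(k)\|\,m_\ell^{-1/2} \;\lesssim\; \lr\,\frac{m_0^{1/2}}{m_\ell}\,\|\res^k\|_{\CND{0}{\dom}'},
\]
and a further triangle inequality $\|W^\ell(n)-W^\ell(0)\|\le\sum_{k=0}^{n-1}\|W^\ell(k+1)-W^\ell(k)\|$ sums this into the claimed estimate. There is no real obstacle: the only point requiring care is the spectral-norm/dual-pairing step (ensuring that the $C^0$ bound on the matrix-valued partial derivative transfers to a spectral bound on the integrated gradient); the rest is mechanical telescoping together with the uniform bound from Lemma \ref{lemma:deep:partial-bound}, whose weight-norm hypothesis is supplied directly by the standing assumption on $\|W^\ell(k)-W^\ell(0)\|m_\ell^{-1/2}$ for $k<n$.
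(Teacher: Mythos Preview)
Your proposal is correct and follows essentially the same route as the paper: write the gradient descent increment as the integral of $\res^k$ against $\partial_{W^\ell}f^{L+1}$, bound the spectral norm via the $\CND{0}{\dom}/\CND{0}{\dom}'$ pairing together with Lemma~\ref{lemma:deep:partial-bound} (whose weight hypothesis comes from the triangle inequality exactly as you note), and telescope. The only cosmetic difference is that the paper telescopes first and then bounds the sum, whereas you bound each step and then sum; the content is identical.
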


\begin{proof}

  The proof is analogous to \cite[Lemma 7.2]{Welper2023a} for gradient flow instead of gradient descent. By assumption, we have
\begin{align*}
  \|W^\ell(k)\| m_\ell^{-1/2} & \lesssim 1, &
  0 & \le k < n, &
  \ell & =1, \dots, L.
\end{align*}
With loss $\loss$, residual $\res^k = f_{\theta^k}-f$, gradient descent step
\begin{equation*}
  W^\ell(k+1) - W^\ell(k) 
  = - \lr \nabla_{W^\ell} \loss
  = - \lr \int_\dom \res^k(x) \partial_{W_\ell} f^{L+1}(x) \, dx
\end{equation*}
and a telescopic sum, we have
\begin{align*}
  \left\|W^\ell(n) - W^\ell(0)\right\|
  & = \left\|\sum_{k=0}^{n-1} W^\ell(k+1) - W^\ell(k) \right\|
  \\
  & = \lr \left\|\sum_{k=0}^{n-1} \int_\dom \res^k(x) \partial_{W_\ell} f^{L+1}(x) \, dx \right\|
  \\
  & \le \lr \sum_{k=0}^{n-1} \int_\dom |\res^k(x)| \left\|\partial_{W_\ell} f^{L+1}(x) \right\| \, dx
  \\
  & \lesssim \lr \left(\frac{m_0}{m_\ell}\right)^{1/2} \sum_{k=0}^{n-1} \|\res^k\|_{\CND{0}{\dom}'},
\end{align*}
where in the last step we have used Lemma \ref{lemma:deep:partial-bound}. Multiplying with $m_\ell^{-1/2}$ shows the result.

\end{proof}

\subsection{Gradient Bounds}

We bound the gradients as required for assumption \eqref{eq:abstract:lr} in Theorem \ref{th:general:convergence}.

\begin{lemma} \label{lemma:deep:gradient-bound}
  
  Assume that $\activation$ satisfies the growth and Lipschitz conditions \eqref{eq:deep:assumption:activation-growth}, \eqref{eq:deep:assumption:activation-lipschitz} and may be different in each layer. Assume the weights $\|W^\ell(n)\| m_\ell^{-1/2} \lesssim 1$ are bounded and $m_L \sim m_{L-1} \sim \dots \sim m_0$. Then
  \begin{equation*}
    \|\nabla_\theta \loss(\theta^n)\|
    \lesssim \|\res^n\|_{L_2(\dom)}.
  \end{equation*}

\end{lemma}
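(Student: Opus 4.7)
The plan is to derive the bound as a direct consequence of Lemma \ref{lemma:deep:partial-bound}. Since only the second-to-last layer weights $W^{L-1}$ are trained in the setup \eqref{eq:deep:network}, we have $\nabla_\theta \loss(\theta^n) = \nabla_{W^{L-1}} \loss(\theta^n)$, and differentiating the $L_2(\dom)$ loss under the integral gives
\begin{equation*}
  \nabla_{W^{L-1}} \loss(\theta^n)
  = \int_\dom \res^n(x) \, \partial_{W^{L-1}} f^{L+1}(x) \, dx,
\end{equation*}
an integral of matrix-valued functions. This is the same identity used in the derivation of Lemma \ref{lemma:deep:close-to-initial}.

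Next, I would pass norms inside the integral via the triangle inequality and then apply Cauchy-Schwarz in $L_2(\dom)$ to obtain
\begin{equation*}
  \|\nabla_{W^{L-1}} \loss(\theta^n)\|
  \le \int_\dom |\res^n(x)| \, \|\partial_{W^{L-1}} f^{L+1}(x)\| \, dx
  \le \|\res^n\|_{L_2(\dom)} \, \|\partial_{W^{L-1}} f^{L+1}\|_{L_2(\dom)}.
\end{equation*}
It then remains to control the second factor by a constant.

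For that factor, I would invoke Lemma \ref{lemma:deep:partial-bound} at layer $\ell = L-1$: since the weights satisfy $\|W^\ell(n)\| m_\ell^{-1/2} \lesssim 1$ (the hypothesis of the present lemma guarantees exactly the bound required to apply Lemma \ref{lemma:deep:partial-bound}), we get
\begin{equation*}
  \|\partial_{W^{L-1}} f^{L+1}\|_{\CND{0}{\dom}}
  \lesssim \left(\frac{m_0}{m_{L-1}}\right)^{1/2} \lesssim 1,
\end{equation*}
where the last inequality uses the assumption $m_0 \sim m_{L-1}$. Since $\dom = \Sd$ has finite (bounded) volume, the $\CND{0}{\dom}$ bound upgrades to an $L_2(\dom)$ bound on $\|\partial_{W^{L-1}} f^{L+1}(\cdot)\|$ by the same constant, and combining with the display above yields $\|\nabla_\theta \loss(\theta^n)\| \lesssim \|\res^n\|_{L_2(\dom)}$.

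I do not anticipate a real obstacle: the only subtlety is matching the matrix norm used in Lemma \ref{lemma:deep:partial-bound} (spectral norm on the image of $\partial_{W^{L-1}} f^{L+1}(x)$) with the norm in which we want the gradient bound. Since the gradient is a matrix of the same shape as $W^{L-1}$ and its spectral norm is bounded by its Frobenius norm integrated pointwise, both interpretations lead to the same conclusion up to the hidden generic constant $c$.
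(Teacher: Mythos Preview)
Your argument is correct and arrives at the same bound, but it takes a different route from the paper. The paper computes the partial derivative explicitly as
\[
  \partial_{W_{ij}^{L-1}} f^{L+1}
  = W_i^L\, m_L^{-1/2} m_{L-1}^{-1/2}\, \dactivationp{f_i^L}\, \activationp{f_j^{L-1}},
\]
observes that the resulting gradient matrix has rank one, namely $\nabla_\theta \loss(\theta^n) = m_L^{-1/2} m_{L-1}^{-1/2}\, u v^T$ with $u = \activationp{f^{L-1}}$ and $v = \dualp{\res^n, W^L \odot \dactivationp{f^L}}$, and then bounds $\|u\|$ and $\|v\|$ separately via the forward bounds $\|\activationp{f^\ell}\|_{\CN{0}} \lesssim m_0^{1/2}$ from \cite[Lemma~5.5]{Welper2023a}. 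Your approach instead treats $\partial_{W^{L-1}} f^{L+1}$ as a black box and invokes Lemma~\ref{lemma:deep:partial-bound} directly, which is more economical given that this lemma is already available. The paper's version has the advantage of exposing the rank-one structure (which can be informative), while yours avoids recomputing anything. Your closing worry about spectral versus Frobenius norms is unnecessary: by the paper's convention the $\CND{0}{\dom}$ norm on matrix-valued functions uses the spectral norm pointwise, which is exactly the norm you need after pulling the spectral norm inside the integral.
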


\begin{proof}

Choosing $\pW^{L-1} = W^{L-1}(n)$ as the gradient descent iterate, an elementary computation shows that
\begin{equation*}
  \partial_{\pW_{ij}^{L-1}} \pf_r^{L+1}
  = \pW_i^L m_L^{-1/2} m_{L-1}^{-1/2} \dactivationp{\pf_i^L} \activationp{\pf_j^{L-1}},
\end{equation*}
where the last weight $\pW^L$ is a vector because the network is scalar valued, see e.g. \cite[Proof of Lemma 4.1]{Welper2023a}. Since we only optimize layer $L-1$, it follows that the gradient
\begin{equation*}
  \nabla_\theta \loss(\theta^n)
  = m_L^{-1/2} m_{L-1}^{-1/2} \activationp{\pf^{L-1}} \dualp{\res^n, \pW^L \odot \dactivationp{\pf^L}}
  =: m_L^{-1/2} m_{L-1}^{-1/2} u v^T,
\end{equation*}
with element-wise product $\odot$ is a rank $1$ matrix with spectral norm $\|uv^T\| = \|u\|\|v\|$. From \cite[Lemma 5.5]{Welper2023a} applied to $\activation$ and $\dactivation$, we have
\begin{align*}
  \left\|\activationp{\pf^\ell}\right\|_\CN{0}
  & \lesssim m_0^{1/2}, &
  \left\|\dactivationp{\pf^\ell}\right\|_\CN{0}
  & \lesssim m_0^{1/2}, &
  \ell & = 1, \dots, L+1.
\end{align*}
Thus, with $W_{\cdot, i}^L = \pm 1$ and $m_L \sim m_{L-1} \sim \dots \sim m_0$, we conclude that
\begin{equation*}
  \|\nabla_\theta \loss(\theta^n)\|
  \lesssim \|\res^n\|_{L_2(\dom)}.
\end{equation*}
which shows the lemma.

\end{proof}

\subsection{Perturbations}

In this section, we show the perturbation assumption \eqref{eq:assume-ntk-perturbations} of Theorem \ref{th:general:convergence}. We denote two separate perturbations with an extra $\bar{\cdot}$ and $\tilde{\cdot}$, so that we have weights $\pW^\ell$, $\ppW^\ell$ with respective network evaluations $\pf^\ell(x)$, $\ppf^\ell(x)$ as well as the perturbed empirical NTKs
\begin{equation*}
  \pentk(x,y) := \sum_{|\w| = L-1} \partial_\w f_r^{L+1}(x) \partial_\w \pf_r^{L+1}(y).
\end{equation*}
and 
\begin{equation*}
  \ppentk(x,y) := \sum_{|\w| = L-1} \partial_\w \ppf_r^{L+1}(x) \partial_\w \pf_r^{L+1}(y).
\end{equation*}

The initial random weight matrices $W^\ell := W^\ell(0)$ are bounded with high probability and because weigts do not move far from their initial by Lemma \ref{lemma:deep:close-to-initial}, all relevant perturbations will have the same property. Therefore, for now we assume that
\begin{align} \label{eq:deep:assumption:weights-domain-bounded}
   \left\|W^\ell\right\| m_\ell^{-1/2} & \lesssim 1, & 
   \left\|\pW^\ell\right\| m_\ell^{-1/2} & \lesssim 1, &
   \left\|\ppW^\ell\right\| m_\ell^{-1/2} & \lesssim 1, &
   \|x\| & \lesssim 1 \, \forall x \in \dom.
\end{align}

\begin{lemma}[{\cite[Lemma 4.3]{Welper2023a}}] \label{lemma:deep:continuity:entk-holder}
  
  Assume that $\activation$ and $\dactivation$ satisfy the growth and Lipschitz conditions \eqref{eq:deep:assumption:activation-growth}, \eqref{eq:deep:assumption:activation-lipschitz} and may be different in each layer. Assume the weights, perturbed weights and domain are bounded \eqref{eq:deep:assumption:weights-domain-bounded} and $m_L \sim m_{L-1} \sim \dots \sim m_0$. Then for $0 < \sm < 1$
  \begin{align*}
    \left\|\pentk - \ppentk\right\|_\CHHN{\sm}{\sm}
                                                  & \lesssim \frac{m_0}{m_L} \left[ \sum_{k=0}^{L-1} \max_{V^k=\pW^k, \ppW^k}\left\|W^k - V^k \right\| m_k^{-1/2} \right]^{1-\sm}.
  \end{align*}

\end{lemma}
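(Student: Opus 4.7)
The key structural observation is that $\pentk$ and $\ppentk$ only differ in the $x$-factor, since the same bar-perturbed weights are used in both for the $y$-factor. Consequently
\begin{equation*}
  \pentk(x,y) - \ppentk(x,y)
  = \sum_{|\w|=L-1} D_\w(x)\, G_\w(y),
  \qquad
  D_\w(x) := \partial_\w f_r^{L+1}(x) - \partial_\w \ppf_r^{L+1}(x),
  \quad
  G_\w(y) := \partial_\w \pf_r^{L+1}(y).
\end{equation*}
Since the $y$-dependence sits entirely in $G_\w(y)$, the bi-Hölder norm $\|\cdot\|_{C^{0;s,s}}$ decouples into separate one-variable Hölder estimates on the two factors, modulo a Cauchy--Schwarz-style pairing in $\w$.

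For the $y$-factor, I would prove a Hölder analogue of Lemma \ref{lemma:deep:partial-bound}, showing that for any coefficients $(\alpha_\w)$ the function $\sum_\w \alpha_\w G_\w(y)$ lies in $C^{0;s}(\dom)$ with the same $(m_0/m_\ell)^{1/2}$-type scaling. This uses only the explicit layer-$L{-}1$ formula
\begin{equation*}
  \partial_{\pW^{L-1}_{ij}} \pf^{L+1}(y)
  = \pW^L_{\cdot i}\, m_L^{-1/2} m_{L-1}^{-1/2}\, \dactivationp{\pf_i^L(y)}\, \activationp{\pf_j^{L-1}(y)},
\end{equation*}
the forward Hölder bounds on the hidden activations (from Lemma 5.5 of \cite{Welper2023a}, already invoked in the gradient-bound lemma above), and the Lipschitz property \eqref{eq:deep:assumption:activation-lipschitz} together with \eqref{eq:deep:assumption:dactivation-bounded}. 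No perturbation exponent enters here.

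For the $x$-factor I would exploit interpolation between two bounds on $D_\w$:
\begin{equation*}
  \|D_\w\|_{C^0(\dom)} \lesssim \text{(weight-perturbation sum)},
  \qquad
  \|D_\w\|_{C^{0;1}(\dom)} \lesssim \text{(uniform constant)}.
\end{equation*}
The $C^0$ bound is obtained by a telescoping argument through the layers $k=0,\dots,L-1$: replace the weights one layer at a time, so that each term in the telescope carries a factor $\|W^k - \ppW^k\| m_k^{-1/2}$, and the remaining forward/backward propagation produces the $(m_0/m_L)^{1/2}$-factor (as in Lemma \ref{lemma:deep:partial-bound}). The Lipschitz bound follows by differentiating the explicit partial-derivative formula in $x$ and invoking the bounded weights together with the bounded $\dactivation$; no cancellation between the $W$ and $\ppW$ chains is needed, only boundedness of each. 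Hölder interpolation $\|g\|_{C^{0;s}} \lesssim \|g\|_{C^0}^{1-s}\|g\|_{C^{0;1}}^{s}$ then gives the exponent $(1-s)$ on the weight-perturbation sum.

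Combining, one gets
\begin{equation*}
  \|\pentk-\ppentk\|_{C^{0;s,s}}
  \lesssim \Bigl\|\sum_\w D_\w G_\w\Bigr\|_{C^{0;s,s}}
  \lesssim \frac{m_0}{m_L}\,
  \Bigl[\sum_k \|W^k-V^k\| m_k^{-1/2}\Bigr]^{1-s},
\end{equation*}
where the $m_0/m_L$-prefactor collects the scaling contributions of both factors (one $(m_0/m_L)^{1/2}$ from the $D_\w$-telescope and one from the $G_\w$-bound). The main obstacle, in my view, is performing the telescoping so that perturbations on every layer $k=0,\dots,L-1$ are accounted for uniformly (the forward signals $\pf^\ell, \ppf^\ell$ appearing inside $\dactivation$ and $\activation$ must themselves be compared via the Lipschitz assumption \eqref{eq:deep:assumption:activation-lipschitz} and a secondary telescope), while still extracting the sharp $(1-s)$ Hölder exponent from interpolation rather than losing a factor through a naive Lipschitz estimate. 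Careful bookkeeping of the $m_\ell^{-1/2}$ normalizations at each layer then yields the asserted bound.
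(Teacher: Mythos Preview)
Your sketch is correct and matches the approach of the cited reference: the paper itself does not reprove the lemma but defers to \cite[Lemma 4.3]{Welper2023a} (stated there for $\pW=\ppW$) and only remarks that for two distinct perturbations one takes the maximum over both on the right-hand side, pointing to \cite[Lemma 5.6]{Welper2023a} as the place this enters. Your decomposition into the common $y$-factor $G_\w$ and the $x$-factor difference $D_\w$, followed by a layerwise telescope for the $C^0$ bound and $C^0/C^{0;1}$ interpolation to produce the $(1-s)$ exponent, is exactly that argument.
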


\begin{proof}
  The reference \cite[Lemma 4.3]{Welper2023a} only considers the case that the two perturbations $\pW^\ell = \ppW^\ell$ are identical. Howver, the proof remains unchanged, except that we have to maximize over both perturbation in the right hand side. This originates form the proof of the intermediate \cite[Lemma 5.6]{Welper2023a}.
\end{proof}

\subsection{Concentration}

The concentration inequality \eqref{eq:assume-initial-close-to-ntk} of Theorem \ref{th:general:convergence} is provided by the following lemma.

\begin{lemma}[{\cite[Lemma 4.4]{Welper2023a}}] \label{lemma:deep:concentration:concentration-ntk}
  Let $\sm = \tm = 1/2$ and $k = 0, \dots, L-1$.
  \begin{enumerate}
    \item Assume that $W^L \in \{-1, +1\}$ with probability $1/2$ each.
    \item Assume that all $W^k$ are are i.i.d. standard normal. 
    \item Assume that $\activation$ and $\dactivation$ satisfy the growth condition \eqref{eq:deep:assumption:activation-growth}, have uniformly bounded derivatives \eqref{eq:deep:assumption:dactivation-bounded}, derivatives $\activation^{(i)}$, $i=0, \dots, 3$ are continuous and have at most polynomial growth for $x \to \pm \infty$ and the scaled activations satisfy
    \begin{align*}
      \left\|\partial^i (\activation_a) \right\|_N & \lesssim 1, & 
      \left\|\partial^i (\dactivation_a) \right\|_N & \lesssim 1, & 
      a & \in \{\gp^k(x,x): x \in \dom\}, &
      i & = 1, \dots, 3,
    \end{align*}
    with $\activation_a(x) := \activation(ax)$ and
    \begin{align*}
      \|f\|_N^2 := \int_\real f(x)^2 d\gaussian{0,1}(x).
    \end{align*}
    The activation functions may be different in each layer.
    \item For all $x \in \dom$ assume
    \begin{equation*}
      \gp^k(x,x) \ge c_\gp > 0.
    \end{equation*}
    \item The widths satisfy $m_\ell \gtrsim m_0$ for all $\ell=0, \dots, L$.
  \end{enumerate}
  Then, with probability at least
  \begin{equation*}
    1 - c \sum_{k=1}^{L-1} e^{-m_k} + e^{-u_k}
  \end{equation*}
  we have
  \begin{equation*}
    \left\| \entk - \ntk \right\|_\CHHN{\sm}{\tm} 
    \lesssim \sum_{k=0}^{L-1} \frac{m_0}{m_k}\left[ \frac{\sqrt{d}+\sqrt{u_k}}{\sqrt{m_k}} + \frac{d+u_k}{m_k} \right]
    \le \frac{1}{2} c_\gp
  \end{equation*}
  for all $u_1, \dots, u_{L-1} \ge 0$ sufficiently small so that the rightmost inequality holds.

\end{lemma}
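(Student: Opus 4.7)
The plan is to exploit the very specific structure of this network. Since only $W^{L-1}$ is trained, $W^{L+1} \in \{\pm 1\}$ is a sign vector, and the chain rule yields $\partial_{W^{L-1}_{ij}} f^{L+1}(x) = W^{L+1}_i m_L^{-1/2} m_{L-1}^{-1/2} \dactivationp{f^L_i(x)} \activationp{f^{L-1}_j(x)}$. Using $(W^{L+1}_i)^2 = 1$, this gives the product decomposition
\begin{equation*}
\entk(x,y) = A(x,y) \, B(x,y),
\end{equation*}
where $A(x,y) = m_{L-1}^{-1} \sum_j \activationp{f^{L-1}_j(x)} \activationp{f^{L-1}_j(y)}$ and $B(x,y) = m_L^{-1} \sum_i \dactivationp{f^L_i(x)} \dactivationp{f^L_i(y)}$. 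The limiting kernel $\ntk$ factors identically into the corresponding conditional-expectation limits, so $\entk - \ntk$ splits into the concentration of $A$ and $B$ to their conditional means plus the propagation of error in $\gp^\ell$ from the deeper layers.

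For pointwise concentration I would condition on $W^0, \ldots, W^{L-2}$. Conditionally, $\{f^{L-1}_j(\cdot)\}_j$ is a collection of i.i.d.\ centered Gaussian processes on $\dom$ with covariance equal to the empirical kernel produced by layer $L-2$, and similarly for $\{f^L_i\}_i$. Writing $A(x,y)$ as an average of $m_{L-1}$ i.i.d.\ sub-exponential variables---which holds because $\activation,\dactivation$ are Lipschitz with at most linear growth and the inputs lie in $\Sd$---and applying Bernstein's inequality gives the pointwise rate $\sqrt{u/m_{L-1}} + u/m_{L-1}$ with probability at least $1 - 2e^{-u}$. The assumption $\gp^k(x,x) \ge c_\gp$ is precisely what keeps the conditional Gaussian variances bounded away from zero so that the sub-exponential constants stay under control uniformly in $x$.

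To handle the deeper layers $k < L-1$ I would induct on $k$: if the empirical conditional covariance $\hat\gp^k$ satisfies $\|\hat\gp^k - \gp^k\|_\CHHN{\sm}{\tm} \le \epsilon_k$, then the Lipschitz regularity of $\activation,\dactivation$ together with $\gp^k(x,x) \ge c_\gp$ yields $\epsilon_{k+1} \le C \epsilon_k + $ fresh Bernstein noise at layer $k+1$. Unrolling the recursion produces the sum $\sum_k \frac{m_0}{m_k}[\cdots]$ appearing in the statement, with the factor $m_0/m_k$ capturing the amplification of an error introduced at layer $k$ as it propagates through the remaining $L-1-k$ stages in the way made precise in \cite[Lemma~5.6]{Welper2023a}.

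The main obstacle is upgrading pointwise concentration to a uniform bound in the H\"older norm $\CHHN{\sm}{\tm}$. I would combine an $\epsilon$-net on $\Sd \times \Sd$---whose metric entropy is $O(d \log(1/\epsilon))$ and produces the $\sqrt{d}$ contribution in the conclusion---with a deterministic H\"older modulus for $\entk - \ntk$ obtained from the continuity and polynomial growth of $\activation^{(i)}$, $i = 0, \ldots, 3$, together with the Gaussian-matrix spectral bounds $\|W^\ell\| m_\ell^{-1/2} \lesssim 1$ that hold with probability at least $1 - e^{-m_\ell}$. Controlling the H\"older modulus needs essentially one more derivative than pointwise concentration, which is why the hypotheses request continuity and polynomial growth through order three; this is the technically delicate part of the argument, and is where the H\"older-norm concentration machinery developed in \cite[Section~6.1]{Welper2023a} would be directly invoked, together with a union bound over the $L-1$ layers that accounts for the $\sum_k(e^{-m_k}+e^{-u_k})$ failure probability in the statement.
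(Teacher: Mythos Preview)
The paper does not prove this lemma at all: it is simply quoted from \cite[Lemma~4.4]{Welper2023a} and used as a black box, with no argument given here. There is therefore nothing to compare your proposal against in the present paper. Your sketch---product decomposition of $\entk$, layerwise Bernstein concentration conditioned on earlier weights, inductive propagation of the error in $\hat\gp^k$, and an $\epsilon$-net plus deterministic H\"older modulus to pass from pointwise to $\CHHN{\sm}{\tm}$ bounds---is a reasonable outline of how such a result is obtained, and you correctly point to the relevant machinery in \cite{Welper2023a} itself; but as far as this paper is concerned the lemma is imported wholesale and no proof is expected.
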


\subsection{Bounds for Integral Kenrels}

The above lemmas provide perturbation and concentration results for the kernels $\ntk$ and $\entk$ in Hölder-norms $\|\cdot\|_\CHHN{\sm}{\sm}$. These imply bounds for the corresponding integral operators $H$ and $\hat{H}$ by the following lemma.

\begin{lemma}[{\cite[Lemma 6.16]{Welper2023a}}] \label{lemma:deep:supplements:kernel-bound}

  Let $0 < \sm, \tm < 1$. Then for any $\epsilon > 0$ with $\sm+\epsilon \le 1$ and $\tm+\epsilon<1$, we have
  \begin{equation*}
    \iint_{\dom \times \dom} f(x) k(x,y) g(y) \, dx \, dy
    \le \|f\|_{H^{-\sm}(\Sd)} \|g\|_{H^{-\tm}(\Sd)} \|k\|_\CHHND{\sm+\epsilon}{\tm+\epsilon}{\Sd}.
  \end{equation*}
  
\end{lemma}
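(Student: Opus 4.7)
The strategy is duality: I would interpret the double integral as a bilinear pairing and exploit that sufficiently Hölder continuous functions on the compact manifold $\Sd$ embed into Sobolev spaces of strictly lower exponent, which is precisely why the slackness $\epsilon > 0$ appears in the statement.

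First, I would introduce the integral operator $Tg(x) := \int_\dom k(x,y) g(y)\, dy$, so that the left-hand side equals the $L^2$ pairing $\dualp{f, Tg}_{L^2(\Sd)}$. By the standard $H^{-\sm}$--$H^{\sm}$ duality on $\Sd$, this is bounded by $\|f\|_{H^{-\sm}(\Sd)} \|Tg\|_{H^{\sm}(\Sd)}$, so the task reduces to proving
\begin{equation*}
\|Tg\|_{H^{\sm}(\Sd)} \lesssim \|k\|_{\CHHND{\sm+\epsilon}{\tm+\epsilon}{\Sd}} \|g\|_{H^{-\tm}(\Sd)}.
\end{equation*}

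Second, for each fixed $x \in \Sd$ I would rewrite $Tg(x) = \dualp{k(x, \cdot), g}_{L^2}$ as an $H^{\tm}$--$H^{-\tm}$ duality pairing. The embedding $\CHND{\tm+\epsilon}{\Sd} \hookrightarrow H^{\tm}(\Sd)$, which is valid since $\epsilon > 0$ and $\tm + \epsilon < 1$ on the compact sphere, gives
\begin{equation*}
|Tg(x)| \lesssim \|k(x,\cdot)\|_{\CHND{\tm+\epsilon}{\Sd}} \|g\|_{H^{-\tm}(\Sd)}.
\end{equation*}
Applying the same estimate to the difference $Tg(x) - Tg(x') = \dualp{k(x,\cdot) - k(x',\cdot), g}_{L^2}$, and using that the mixed norm $\|k\|_{\CHHND{\sm+\epsilon}{\tm+\epsilon}{\Sd}}$ simultaneously controls the $(\sm+\epsilon)$-Hölder seminorm (in $x$) of the $\CHND{\tm+\epsilon}{\Sd}$-valued map $x \mapsto k(x,\cdot)$, I would conclude that $Tg \in \CHND{\sm+\epsilon}{\Sd}$ with
\begin{equation*}
\|Tg\|_{\CHND{\sm+\epsilon}{\Sd}} \lesssim \|k\|_{\CHHND{\sm+\epsilon}{\tm+\epsilon}{\Sd}} \|g\|_{H^{-\tm}(\Sd)}.
\end{equation*}

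Third, a second application of the Hölder--Sobolev embedding, this time $\CHND{\sm+\epsilon}{\Sd} \hookrightarrow H^{\sm}(\Sd)$ (using $\sm + \epsilon > \sm$), converts the Hölder bound into the required $H^{\sm}$ bound on $Tg$, and combined with the first step this finishes the proof. The main obstacle is the Hölder--Sobolev embedding $\CHND{\alpha}{\Sd} \hookrightarrow H^{\beta}(\Sd)$ for $0 < \beta < \alpha < 1$ on $\Sd$, which forces the strict gap $\epsilon > 0$ and cannot be replaced by $\epsilon = 0$. This embedding is classical on Euclidean domains and extends to the compact manifold $\Sd$ by localization via charts or directly via spherical harmonics, but the precise form of the mixed Hölder norm from \cite{Welper2023a} and the implicit constants in the two embeddings must be tracked carefully through the argument.
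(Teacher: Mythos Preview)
The paper does not give its own proof of this lemma: it is quoted verbatim from \cite[Lemma~6.16]{Welper2023a} and used as a black box, so there is no in-paper argument to compare against. Your plan is the standard one and is correct: reduce to bounding $\|Tg\|_{H^{\sm}}$ via $H^{-\sm}$--$H^{\sm}$ duality, freeze $x$ and use $H^{\tm}$--$H^{-\tm}$ duality together with the embedding $\CHND{\tm+\epsilon}{\Sd}\hookrightarrow H^{\tm}(\Sd)$ to control $Tg$ and its increments in $x$, and finish with $\CHND{\sm+\epsilon}{\Sd}\hookrightarrow H^{\sm}(\Sd)$. The only point worth making explicit is that the mixed norm $\|k\|_{\CHHND{\sm+\epsilon}{\tm+\epsilon}{\Sd}}$ in \cite{Welper2023a} is defined precisely so that it dominates both $\sup_x \|k(x,\cdot)\|_{\CHND{\tm+\epsilon}{\Sd}}$ and the $(\sm+\epsilon)$-H\"older seminorm of $x\mapsto k(x,\cdot)$ with values in $\CHND{\tm+\epsilon}{\Sd}$; once that is unpacked, your second step goes through as written.
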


\subsection{Proof of Main Result}
\label{sec:deep:convergence}

\begin{proof}[Proof of Theorem \ref{th:deep:convergence}]

  We prove the result with Theorem \ref{th:general:convergence}. To establish its assumptions with the preceding lemmas, we need to bound the weights. To this end, we define
  \begin{equation*}
    \wnorm{\cdot} := \max_{0 \le \ell \le L} \|\cdot\| m_\ell^{1/2}
  \end{equation*}
  with spectral norm $\|\cdot\|$. The initial weights satisfy $\|W(0)^\ell\| m_\ell^{-1/2} \lesssim 1$, with probability at least $1 - 2 e^{-c m}$ since $m_\ell \sim m$ by assumption, see e.g. \cite[Theorem 4.4.5]{Vershynin2018}. By the conditions on $\wnorm{\theta^0 - \Box}$, $\Box \in \{\theta^{n-1}, \ptheta, \pptheta\}$ in \eqref{eq:abstract:weight-diff} and \eqref{eq:assume-ntk-perturbations} this bound can be extended to gradient descent iterates and perturbations, so that we obtain
\begin{align} \label{eq:deep:proof:1:th:convergence}
  \max\{
    \|W^\ell\|,
    \|\pW^\ell\|,
    \|\ppW^\ell\|
  \} m_\ell^{-1/2} & \lesssim 1, &
  \|W(k)^\ell - W(0)^\ell\| m_\ell^{-1/2} & \lesssim 1, & 
\end{align}
for $\ell = 0, \dots, L$. Now, the result follows from Theorem \ref{th:general:convergence} for which we verify all assumptions.

\begin{enumerate}

  \item Coercivity \eqref{eq:abstract:coercive} is given by assumption \eqref{eq:deep:coercivity}.

  \item The weight distance \eqref{eq:abstract:weight-diff} follows directly from \eqref{eq:deep:proof:1:th:convergence} and Lemma \ref{lemma:deep:close-to-initial}.

  \item Since $\|\res^{n-1}\|_{L_2(\dom)} \le \|\res^{n-1}\|_{H^s(\dom)}$ is uniformly bounded during the gradient descent iteration (by inductive application of Theorem \ref{th:general:convergence}), the gradient satisfies the bound
  \begin{equation*}
    \wnorm{\nabla_\theta \loss(\theta^{n-1})}
    \lesssim m^{-1/2}
  \end{equation*}
by \eqref{eq:deep:proof:1:th:convergence} and Lemma \ref{lemma:deep:gradient-bound}. Hence \eqref{eq:abstract:lr} is satisfied with assumption $\lr \le h\sqrt{m}$.

\item From \eqref{eq:deep:proof:1:th:convergence} and Lemma \ref{lemma:deep:continuity:entk-holder}, for sufficiently small $\epsilon$, we have
  \begin{equation*}
    \left\|\pentk - \ppentk\right\|_\CHHN{\sm+\epsilon}{\sm+\epsilon}
     \lesssim h^{1-(\sm+\epsilon)}
     \lesssim h^\alpha
  \end{equation*}
  for $\alpha := 1-(\sm + \epsilon)$, with some constants that depend on $L$ and $\alpha$. With perturbations $H_{\pptheta, \ptheta^0} \res = \int_\dom \pentk(\cdot, y) \res(y) \, dy$ and $H_{\pptheta, \ptheta} \res = \int_\dom \ppentk(\cdot, y) \res(y) \, dy$ and Lemma \ref{lemma:deep:supplements:kernel-bound}, we obtain
  \begin{equation*}
    \|H_{\pptheta, \theta^0}  - H_{\pptheta, \ptheta}\|_{S,0}
    \le c h^\alpha.
  \end{equation*}
  The bounds for $\|H_{\theta^0, \pptheta}  - H_{\ptheta, \pptheta}\|_{S,0}$ follow analogously and thus \eqref{eq:assume-ntk-perturbations} is satisfied.

  \item With \eqref{eq:deep:gp-bounds} and the given assumptions on the network, we can apply Lemma \ref{lemma:deep:concentration:concentration-ntk} with $u_k = \tau$ and $\sm + \epsilon = 1/2$. Thus, with probability at least
  \begin{equation*}
    1 - c e^{-m_0} + e^{-\tau}
  \end{equation*}
  we have
  \begin{equation*}
  \left\| \entk - \ntk \right\|_\CHHN{\sm+\epsilon}{\sm+\epsilon} 
    \lesssim \sqrt{\frac{d}{m_0}} + \sqrt{\frac{\tau}{m_0}} + \frac{\tau}{m_0}
    \lesssim \sqrt{\frac{\tau}{m_0}}
    \lesssim h^\alpha,
  \end{equation*}
  where in the last step we have used the definition of $\tau$. With Lemma \ref{lemma:deep:supplements:kernel-bound}, this directly implies
  \begin{equation*}
    \|H - H_{\theta^0}\|_{S,0} \lesssim h^\alpha.
  \end{equation*}
  and therefore \eqref{eq:assume-initial-close-to-ntk}.

\end{enumerate}

Hence all assumptions of Theorem \ref{th:general:convergence} are satisfied with $\alpha < 1-s$ and the result follows.

\end{proof}

\bibliographystyle{abbrv}
\bibliography{aogd}

\end{document}